\gdef\@copyrightpermission{
  \begin{minipage}{0.2\columnwidth}
   \href{https://creativecommons.org/licenses/by/4.0/}{\includegraphics[width=0.90\textwidth]{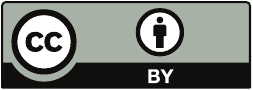}}
  \end{minipage}\hfill
  \begin{minipage}{0.8\columnwidth}
   \href{https://creativecommons.org/licenses/by/4.0/}{This work is licensed under a Creative Commons Attribution International 4.0 License.}
  \end{minipage}
  \vspace{5pt}
}
\keywords{Online Convex Optimization; Meta-algorithms; Regret Minimization; Bandit Feedback; Gradient Estimation}
\theoremstyle{plain}
\newtheorem{theorem}{Theorem}
\newtheorem{lemma}{Lemma}
\newtheorem{corollary}{Corollary}
\theoremstyle{definition}
\newtheorem{definition}{Definition}
\theoremstyle{remark}
\newtheorem{remark}{Remark}
\newcommand{\kibitz}[2]{\ifnum\Comments=1{\textcolor{#1}{{#2}}}\fi}
\definecolor{darkgreen}{rgb}{0,0.6,0}
\newcommand{\kibitz}[2]{\ifnum\Comments=1{\textcolor{#1}{{#2}}}\fi}
\renewcommand{\cite}[1]{\citep{#1}}
\renewcommand{\citet}[1]{\cite{#1}}
\renewcommand{\t}{\text}
\newcommand{\op}[1]{\operatorname{#1}}
\newcommand{\C}[1]{{\mathcal{#1}}} 
\newcommand{\B}[1]{{\mathbb{#1}}} 
\newcommand{\BF}[1]{{\mathbf{#1}}} 
\newcommand{\bra}{\langle}
\newcommand{\ket}{\rangle}
\newcommand{\x}{\BF{x}}
\newcommand{\y}{\BF{y}}
\newcommand{\z}{\BF{z}}
\renewcommand{\vv}{\BF{v}}
\newcommand{\uu}{\BF{u}}
\newcommand{\w}{\BF{w}}
\newcommand{\oo}{\BF{o}}
\title{A Unified Framework for Analyzing Meta-algorithms in Online Convex Optimization}
\author{Mohammad Pedramfar}
\affiliation{
  \institution{Mila - Quebec AI Institute/McGill University}
  \city{Montreal, QC}
  \country{Canada}}
\email{mohammad.pedramfar@mila.quebec}
\author{Vaneet Aggarwal}
\affiliation{
  \institution{Purdue University}
  \city{West Lafayette, IN}
  \country{USA}}
\email{vaneet@purdue.edu}
\begin{abstract}
In this paper, we analyze the problem of online convex optimization in different settings, including different feedback types (full-information/semi-bandit/bandit/etc) in either stochastic or non-stochastic setting and different notions of regret (static adversarial regret/dynamic regret/adaptive regret).
This is done through a framework which allows us to systematically propose and analyze meta-algorithms for the various settings described above.
We show that any algorithm for online linear optimization with deterministic gradient feedback against fully adaptive adversaries is an algorithm for online convex optimization.
We also show that any such algorithm that requires full-information feedback may be transformed to an algorithm with semi-bandit feedback with comparable regret bound. We further show that algorithms that are designed for fully adaptive adversaries using deterministic semi-bandit feedback can obtain similar bounds using only stochastic semi-bandit feedback when facing oblivious adversaries. We use this to describe general meta-algorithms to convert first order algorithms to zeroth order algorithms with comparable regret bounds.
Our framework allows us to analyze online optimization in various settings, recovers several results in the literature with a simplified proof technique, and provides new results.
\end{abstract}
\newcommand{\BibTeX}{\rm B\kern-.05em{\sc i\kern-.025em b}\kern-.08em\TeX}
\begin{document}

\pagestyle{fancy}
\fancyhead{}

\maketitle

\section{Introduction}


Online optimization problems represent a class of problems where the decision-maker, referred to as an agent, aims to make sequential decisions in the face of an adversary with incomplete information \cite{shalev-shwartz12_onlin_learn_onlin_convex_optim,hazan2016introduction}. This setting mirrors a repeated game, where the agent and the adversary engage in a strategic interplay over a finite time horizon, commonly denoted as $T$. The dynamics of this game involve the agent's decision-making process, the adversary's strategy,  and the information exchange through a query oracle.

In this paper, we present a comprehensive approach to solve online convex optimization problems, addressing scenarios where the adversary can choose a function from the class of $\mu$--strongly convex functions (or convex functions for $\mu=0$) at each time step. 
Our approach encompasses different feedback types, including bandit, semi-bandit and full-information feedback in each iteration. 
It also encompasses different types of regret, such as adversarial regret, stochastic regret, and various forms of non-stationary regret.
While the problem has been studied in many special cases, this comprehensive approach sheds light on relations between different cases and the powers and limitations of some of the approaches used in different settings.

One of our key contribution lies in establishing that any semi-bandit feedback online linear (or quadratic) optimization algorithm for fully adaptive adversary is also an online convex (or strongly convex) optimization algorithm. 
While the above result is for semi-bandit feedback, we then show that in online convex optimization for fully adaptive adversary, semi-bandit feedback is generally enough and more information is not needed. 
Further,  we show that algorithms that are designed for fully adaptive adversaries using deterministic semi-bandit feedback can obtain similar bounds using only stochastic semi-bandit feedback when facing oblivious adversaries. 
Finally, we introduce meta algorithms, based on the variant of classical result of~\cite{flaxman2005online,shamir17_optim_algor_bandit_zero_order}, that convert algorithms for semi-bandit feedback, such as the ones mentioned above, to algorithms for bandit or zeroth-order feedback. In addition to recovering many results in the literature with simplified proofs, we give new static regret guarantees for online strongly convex optimization and adaptive regret guarantees for convex optimization with deterministic zeroth order feedback.

\begin{figure*}
{
\centering
\caption{ Summary of the main results}
\resizebox{.9\textwidth}{!}{\includegraphics{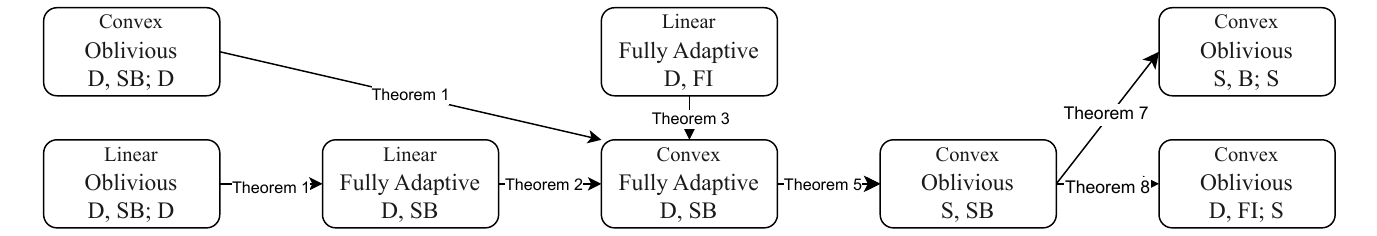}}
}
{ ~\\
Each arrow describes a procedure or meta-algorithm to convert an algorithm from one setting into another.
The first line describes the function class - linear/convex.
The second lines specifies the type of the adversary - fully adaptive or oblivious.
The last line starts with D or S, to denote deterministic or stochastic oracles.
After that, SB is used to denotes semi-bandit, B to denote bandit and FI to denote full information feedback.
Finally, the last D or S, if exists, is used to denotes deterministic or stochastic algorithm.
}\label{fig:main}
\end{figure*}
The key results of this paper are encapsulated in Figure~1, where each arrow represents a procedure or meta-algorithm that transforms an algorithm from one setting to another.
The primary contribution is three-fold:

\noindent (I) We provide a framework that simplifies analysis of meta-algorithms in online learning.

\noindent (II) Using this framework, we formulate several meta-algorithms that facilitate the conversion of algorithms across various settings.

\noindent (III) In particular, we convert several insights in the literature into precise theorems, for example the idea of using 1-point gradient estimator in online optimization is formalized in Meta-algorithms~\ref{alg:first-order-to-zeroth-order} and~\ref{alg:semi-bandit-to-bandit} and Theorems~\ref{thm:first-order-to-zero-order} and~\ref{thm:semi-bandit-to-bandit}.
Similarly, the insight that ``knowing the value of each loss function at two points is almost as useful as knowing the value of each function everywhere'' (See~\cite{agarwal2010optimal}) is formalized in Meta-algorithm~\ref{alg:first-order-to-det-zeroth-order} and Theorem~\ref{thm:first-order-to-det-zero-order}.

This framework not only simplifies proof analysis for numerous existing problems in the field (see Section~\ref{sec:applications}) but also provides a tool for easier extension of multiple results in the literature.
In the following, we go over some of the main meta-algorithms that we discuss in this paper.

\textbf{(i)} The first key result in this paper is Theorem~\ref{thm:main}. 
This result shows that any algorithm for online linear optimization with deterministic subgradient feedback results in a comparable regret when the function class is replaced by a class of convex functions. 
This is the key result that allows us to go from online linear optimization to convex optimization in Figure~1 with semi-bandit feedback. 
We note that this result recovers the result for online convex game in \cite{abernethy2008optimal}, where the authors analyzed online convex optimization with deterministic gradient feedback and fully adaptive adversary and showed that the min-max regret bounds of online convex optimization is equal to that of online linear optimization (or online quadratic optimization is all convex functions are $\mu$-strongly convex). 

While the above result is the key for going from online linear to online convex optimization, the main issue for its applicability is that it requires semi-bandit feedback. In the remaining results, we provide meta-algorithms to increase the applicability of this result. 

\textbf{(ii)} The second key result, given in Theorem \ref{thm:full-into-to-semi-bandit}, is that for any full-information algorithm for online convex optimization with fully adaptive adversary, there is a corresponding algorithm with regret bound of the same order which only requires semi-bandit feedback. This allows us to get results for semi-bandit feedback from that of full information. This combined with Theorem \ref{thm:main} allows us to go from any full-information algorithm for online linear optimization with fully adaptive adversary to an algorithm for online convex optimization with semi-bandit feedback and fully adaptive adversary. 
This combination has been marked as Theorem \ref{thm:full-into-to-semi-bandit} in Figure~1.

\textbf{(iii)} We note that the above results assume that the feedback oracle is deterministic. 
In the next key result, given in Theorem \ref{thm:det-to-stoch}, we relax this assumption. 
We show that any online optimization algorithm for fully adaptive adversaries using deterministic semi-bandit feedback obtains similar regret bounds when facing an oblivious adversary using only stochastic semi-bandit feedback. 
Thus, this result allows us to get the results for online convex optimization with stochastic semi-bandit feedback from the results with deterministic semi-bandit feedback, as can be seen in Fig.~1.
This result gives Theorem~6.5 in~\cite{hazan2016introduction} as a special case.
Further, this allows us to have a result that generalizes both adversarial regret and stochastic regret, as they are commonly defined in the literature.

\textbf{(iv)} The above discussion was mostly focused on full information and semi-bandit feedback settings.
However, in many practical setups, we only have access to zeroth order or bandit feedback. 
In the next key result, given in Theorems \ref{thm:first-order-to-zero-order}-\ref{thm:semi-bandit-to-bandit}, we use the smoothing trick \cite{nemirovsky83_probl_compl_method_effic_optim,flaxman2005online,pedramfar23_unified_approac_maxim_contin_dr_funct}, in order to estimate the gradient from the value oracle. 
Using this estimated gradient, the results for zeroth order and bandit feedback are derived from that for first order and semi-bandit feedback, respectively. 
These results apply to any optimization problem, not necessarily convex. 
This approach recovers the results in~\cite{flaxman2005online} with the base algorithm of Online Gradient Descent ($\mathtt{OGD}$). 
In particular, this approach also matches the SOTA dynamic and adaptive regret bounds for online convex optimization with bandit feedback \cite{zhao21_bandit_convex_optim_non_envir,garber22_new_projec_algor_onlin_convex,agarwal2010optimal} with a simplified proof.

\textbf{(v)} 
In the above results, we assumed that we only have access to an unbiased estimate of the zeroth order (or bandit) feedback.
In~\cite{agarwal2010optimal}, the setting of multi-point bandit feedback is introduced.
Even though their setting and notion of regret is different from our work, one insight is relevant.
Their analysis suggests that, in some cases, if we have access to exact values of the functions, knowing the value of each loss function at two points is almost as useful as knowing the value of each function everywhere.
We formalize this insight by designing a meta-algorithm based on the two-point gradient estimator~\cite{agarwal2010optimal,shamir17_optim_algor_bandit_zero_order} that allows us to convert algorithms for stochastic first order feedback into algorithms for deterministic zeroth order feedback with the same order of regret bound (Theorem~\ref{thm:first-order-to-det-zero-order}).
Note that this result holds for convex and non-convex optimization as it only relies on the function class being Lipschitz.
This result is shown to recover the result for dynamic regret for Improved Ader~\cite{zhao21_bandit_convex_optim_non_envir}, while achieving two new results for adaptive regret for convex function of $O(\sqrt{T})$ and static regret for strongly convex function of $O(\log T)$.

Our work sheds light on the relation between different problems, algorithms and techniques used in online optimization and provides general meta-algorithms that allow conversion between different problem settings. 
For instance, we can use the results for a deterministic algorithm for online linear optimization with oblivious adversaries and deterministic semi-bandit feedback to that for a stochastic algorithm for online convex optimization with oblivious adversaries and stochastic bandit feedback using the different arrows as in Fig.~1. 
The key technical novelty is the recognition that many results, previously understood only in very limited contexts, actually apply more broadly. 
This broader applicability not only simplifies the analysis of numerous existing results within the field but also enables the derivation of multiple new results. For instance, results in \cite{abernethy2008optimal,hazan2016introduction,flaxman2005online,zhao21_bandit_convex_optim_non_envir,agarwal2010optimal,shamir17_optim_algor_bandit_zero_order}  as mentioned above can be demonstrated as corollaries of these more generalized results. Further, we provide new results for deterministic zeroth-order feedback case, showing that adaptive regret for convex function is $O(\sqrt{T})$ and static regret for strongly convex function is $O(\log T)$.

{\bf Related works: }\label{sec:related} The term ``online convex optimization" was first defined in~\cite{zinkevich03_onlin}, where Online Gradient Descent ($\mathtt{OGD}$) was proposed, while this setting was first introduced in~\cite{gordon99_regret}.
The Follow-The-Perturbed-Leader ($\mathtt{FTPL}$) for online linear optimization was introduced by~\cite{kalai05_effic}, inspired by Follow-The-Leader algorithm originally suggested by~\cite{hannan57}.
Follow-The-Regularized-Leader ($\mathtt{FTRL}$) in the context of online convex optimization was coined in~\cite{shalev-shwartz07,shalev-shwartz07_onlin}.
The equivalence of $\mathtt{FTRL}$ and Online Mirror Descent ($\mathtt{OMD}$) was observed by~\cite{hazan10_extrac}. 
Non-stationary measures of regret have been studied in ~\cite{zinkevich03_onlin,hazan09_effic,besbes15_non_station_stoch_optim,daniely15_stron_adapt_onlin_learn,zhang18_dynam_regret_stron_adapt_method,zhang18_adapt_onlin_learn_dynam_envir,zhao20_dynam_regret_convex_smoot_funct,zhao21_bandit_convex_optim_non_envir,lu23_projec_adapt_regret_member_oracl,wang24_non_projec_free_onlin_learn,garber22_new_projec_algor_onlin_convex} 
and many other papers.
We refer the reader to surveys \cite{shalev-shwartz12_onlin_learn_onlin_convex_optim} and~\cite{hazan2016introduction} for more details.

\section{Background and Notations}

For a set $\C{S} \subseteq \B{R}^d$, we define its \textit{affine hull}  $\op{aff}(\C{S})$ to be the set of all points of the form $\sum_{i=1}^m \alpha_i \x_i$ for integer $m \geq 1$, points $(\x_i)_{i=1}^m$ in $\C{S}$, and real numbers $(\alpha_i)_{i=1}^m$.
The \textit{relative interior} of $\C{S}$ is defined as
\begin{align*}
\op{relint}(\C{S}) := \{ \x \in \C{S} \mid \exists r > 0, \B{B}_r(\x) \cap \op{aff}(\C{S}) \subseteq \C{S} \}.
\end{align*}
We use $\B{B}_r(\C{S})$ to denote the union of balls of radius $r$ centered at $\C{S}$ and use $\op{diam}(\C{S}) := \sup_{\x, \y \in \C{S}} \| \x - \y \|$ to denote the diameter of $\C{S}$.
A \textit{function class} is simply a set of real-valued functions.
Given a set $\C{S}$, a \textit{function class} over $\C{S}$ is simply a subset of all real-valued functions on $\C{S}$.
Given a real number $\mu \geq 0$ and a set $\C{S}$, we use $\BF{Q}_\mu$ to denote the class of functions of the form
$q_{\x, \oo}(\y) = \bra \oo, \y - \x \ket + \frac{\mu}{2} \| \y - \x \|^2$,
where $\oo \in \B{R}^d$ and $\x \in \C{S}$.
A set $\C{K} \subseteq \B{R}^d$ is called a \textit{convex set} if for all $\x,\y \in \C{K}$ and $\alpha \in [0, 1]$, we have $\alpha \x + (1 - \alpha) \y \in \C{K}$.
Given a convex set $\C{K}$, a function $f : \C{K} \to \B{R}$ is called \textit{convex} if, for all $\x, \y \in \C{K}$ and $\alpha \in [0, 1]$, we have
$f(\alpha \x + (1 - \alpha) \y) \leq \alpha f(\x) + (1 - \alpha) f(\y)$.
All convex functions are continuous on any point in the relative interior of their domains, but not necessarily at their relative boundary.
In this work, we will only focus on continuous functions.
If $\x \in \op{relint}(\C{K})$ and $f$ is convex and is differentiable at $\x$, then we have
$f(\y) - f(\x) \geq \bra \nabla f(\x), \y - \x \ket$, for all $\y \in \C{K}$.
A vector $\oo \in \B{R}^d$ is called a \textit{subgradient} of $f$ at $\x$ if 
$f(\y) - f(\x) \geq \bra \oo, \y - \x \ket$, for all $\y \in \C{K}$.
%
More generally, given $\mu \geq 0$, we say a vector $\oo \in \B{R}^d$ is a \textit{$\mu$-subgradient} of $f$ at $\x$ if 
$f(\y) - f(\x) \geq \bra \oo, \y - \x \ket + \frac{\mu}{2} \| \y - \x \|^2$,
for all $\y \in \C{K}$.
Given a convex set $\C{K}$, a Lipschitz continuous function $f : \C{K} \to \B{R}$ is convex (resp. $\mu$-strongly convex) if and only if it has a ($\mu$-)subgradient at all points $\x \in \C{K}$.
We use $\nabla^* f$ to denote the set of $\mu$-subgradients of $f$.

\section{Problem Setup}

Online optimization problems can be formalized as a repeated game between an agent and an adversary.
The game lasts for $T$ rounds on a convex domain $\C{K}$ where $T$ and $\C{K}$ are known to both players.
In $t$-th round, the agent chooses an action $\x_t$ from an action set $\C{K} \subseteq \B{R}^d$, then the adversary chooses a loss function $f_t \in \BF{F}$ and a query oracle for the function $f_t$.
Then, for $1 \leq i \leq k_t$, the agent chooses a points $\y_{t, i}$ and receives the output of the query oracle.
Here $k_t$ denotes the total number of queries made by the agent at time-step $t$, which may or may not be known in advance.

To be more precise, an agent consists of a tuple $(\Omega^\C{A}, \C{A}^{\t{action}}, \C{A}^{\t{query}})$, where $\Omega^\C{A}$ is a probability space that captures all the randomness of $\C{A}$.
We assume that, before the first action, the agent samples $\omega \in \Omega$.
The next element in the tuple, $\C{A}^{\t{action}} = (\C{A}^{\t{action}}_1, \cdots, \C{A}^{\t{action}}_T)$ is a sequence of functions such that $\C{A}_t$ that maps the history $\Omega^\C{A} \times \C{K}^{t-1} \times \prod_{s = 1}^{t-1} (\C{K} \times \C{O})^{k_s}$ to $\x_t \in \C{K}$ where we use $\C{O}$ to denote range of the query oracle.
The last element in the tuple, $\C{A}^{\t{query}}$, is the query policy.
For each $1 \leq t \leq T$ and $1 \leq i \leq k_t$, $\C{A}^{\t{query}}_{t, i} : \Omega^\C{A} \times \C{K}^t \times \prod_{s = 1}^{t-1} (\C{K} \times \C{O})^{k_s} \times (\C{K} \times \C{O})^{i-1}$ is a function that, given previous actions and observations, either selects a point $\y_t^i \in \C{K}$, i.e., query, or signals that the query policy at this time-step is terminated.
We may drop $\omega$ as one of the inputs of the above functions when there is no ambiguity.
We say the agent query function is \textit{trivial} if $k_t = 1$ and $\y_{t, 1} = \x_t$ for all $1 \leq t \leq T$.
In this case, we simplify the notation and use the notation $\C{A} = \C{A}^{\t{action}} = (\C{A}_1, \cdots, \C{A}_T)$ to denote the agent action functions and assume that the domain of $\C{A}_t$ is $\Omega^\C{A} \times (\C{K} \times \C{O})^{t-1}$.

A query oracle is a function that provides the observation to the agent.
Formally, a query oracle for a function $f$ is a map $\C{Q}$ defined on $\C{K}$ such that for each $\x \in \C{K}$, the $\C{Q}(\x)$ is a random variable taking value in the observation space $\C{O}$.
The query oracle is called a \textit{stochastic value oracle} or \textit{stochastic zeroth order oracle} if $\C{O} = \B{R}$ and $f(\x) = \B{E}[\C{Q}(\x)]$.
Similarly, it is called a \textit{stochastic (sub)gradient oracle} or \textit{stochastic first order oracle} if $\C{O} = \B{R}^d$ and $\B{E}[\C{Q}(\x)]$ is a (sub)gradient of $f$ at $\x$.
In all cases, if the random variable takes a single value with probability one, we refer to it as a \textit{deterministic} oracle.
Note that, given a function, there is at most a single deterministic gradient oracle, but there may be many deterministic subgradient oracles.
We will use $\nabla$ to denote the deterministic gradient oracle.
We say an oracle is bounded by $B$ if its output is always within the Euclidean ball of radius $B$ centered at the origin.
We say the agent takes \textit{semi-bandit feedback} if the oracle is first-order and the agent query function is trivial.
Similarly, it takes \textit{bandit feedback} if the oracle is zeroth-order and the agent query function is trivial
\footnote{This is a slight generalization of the common use of the term bandit feedback. Usually, bandit feedback refers to the case where the oracle is a \textit{deterministic} zeroth-order oracle and the agent query function is trivial, while our definition allows for \textit{stochastic} oracles.}.
If the agent query function is non-trivial, then we say the agent requires \textit{full-information feedback}.

An adversary $\op{Adv}$ is a set such that each element $\C{B} \in \op{Adv}$, referred to as a \textit{realized adversary}, is a sequence $(\C{B}_1, \cdots, \C{B}_T)$ of functions where each $\C{B}_t$ maps a tuple $(\x_1, \cdots, \x_t) \in \C{K}^t$ to a tuple $(f_t, \C{Q}_t)$ where $f_t \in \BF{F}$ and $\C{Q}_t$ is a query oracle for $f_t$ \footnote{Note that we do not assign a probability to each realized adversary since the notion of regret simply computes the supremum over all realizations.}. 
We say an adversary $\op{Adv}$ is \textit{oblivious} if for any realization $\C{B} = (\C{B}_1, \cdots, \C{B}_T)$, all functions $\C{B}_t$ are constant, i.e., they are independent of $(\x_1, \cdots, \x_t)$.
In this case, a realized adversary may be simply represented by a sequence of functions $(f_1, \cdots, f_T) \in \BF{F}^T$ and a sequence of query oracles $(\C{Q}_1, \cdots, \C{Q}_T)$ for these functions.
In this work we also consider adversaries that are \textit{fully adaptive}, i.e., adversaries with no restriction.
\footnote{
Another form of adversary considered in literature is a \textit{weakly adaptive} adversary where each function $\C{B}_t$ described above does not depend on $\x_t$ and therefore may be represented as a map defined on $\C{K}^{t-1}$.
Clearly any oblivious adversary is a weakly adaptive adversary and any weakly adaptive adversary is a fully adaptive adversary.}
Given a function class $\BF{F}$ and $i \in \{0, 1\}$, we use $\op{Adv}^\t{f}_i(\BF{F})$ to denote the set of all possible realized adversaries with deterministic $i$-th order oracles.
If the oracle is instead stochastic and bounded by $B$, we use $\op{Adv}^\t{f}_i(\BF{F}, B)$ to denote such an adversary.
Finally, we use $\op{Adv}^\t{o}_i(\BF{F})$ and $\op{Adv}^\t{o}_i(\BF{F}, B)$ to denote all oblivious realized adversaries with $i$-th order deterministic and stochastic oracles, respectively.

In order to handle different notions of regret with the same approach, for an agent $\C{A}$, adversary $\op{Adv}$, compact set $\C{U} \subseteq \C{K}^T$ and $1 \leq a \leq b \leq T$, we define \textit{regret} as
\begin{align*}
\C{R}_{\op{Adv}}^{\C{A}}(\C{U})[a, b] 
&:= \sup_{\C{B} \in \op{Adv}} \B{E} \left[ \sum_{t = a}^b f_t(\x_t) - \min_{\uu \in \C{U}} \sum_{t = a}^b f_t(\uu_t) \right]
\end{align*}
where $\uu = (\uu_1, \cdots, \uu_T)$ and the expectation in the definition of the regret is over the randomness of the algorithm and the query oracles.
We use the notation $\C{R}_{\C{B}}^{\C{A}}(\C{U})[a, b] := \C{R}_{\op{Adv}}^{\C{A}}(\C{U})[a, b]$ when $\op{Adv} = \{\C{B}\}$ is a singleton.

\textit{Static adversarial regret} or simply \textit{adversarial regret} corresponds to $a = 1$, $b = T$ and $\C{U} = \C{K}_{\star}^T := \{(\x, \cdots, \x) \mid \x \in \C{K}\}$.
When $a = 1$, $b = T$ and $\C{U}$ contains only a single element then it is referred to as the \textit{dynamic regret} \cite{zinkevich03_onlin,zhang18_adapt_onlin_learn_dynam_envir}. 
This notion of regret allows us to provide an upper bound on regret that may depend on the comparator. 
In Section~\ref{sec:applications}, we will discuss algorithms that provide guarantees depending on the path-length, defined with $P_T := \uu \mapsto \sum_{t = 1}^{T-1} \| \uu_t - \uu_{t+1} \| : \C{K}^T \to \B{R}$.
\textit{Adaptive regret}, is defined as
$\max_{1 \leq a \leq b \leq T} \C{R}_{\op{Adv}}^{\C{A}}(\C{K}_{\star}^T)[a, b]$ \cite{hazan09_effic}. 
A crucial point in this definition is that the comparison is with respect to a (potentially) different optimum for any interval.
We drop $a$, $b$ and $\C{U}$ when the statement is independent of their value or their value is clear from the context.

\begin{remark}
As a special case of static adversarial regret, if $\op{Adv}$ is oblivious and every $\C{B} \in \op{Adv}$ corresponds to $f_1 = f_2 = \cdots = f_T = f$ and $\C{Q}_1 = \C{Q}_2 \cdots = \C{Q}_T = \C{Q}$ for some $f \in \BF{F}$ and some stochastic oracle $\C{Q}$ for $f$, then the regret is referred to as the \textit{stochastic regret}.
\end{remark}

Another metric for comparing algorithms is \textit{high probability regret bounds} which we define for stochastic algorithms.
Given $\omega \in \Omega$, we may consider $\C{A}_\omega$ as a deterministic algorithm.
We say $h : [0, 1] \to [0, \infty)$ is a \textit{high probability regret bound} for $(\C{A}, \op{Adv})$ if for each $\C{B} \in \op{Adv}$ and $\delta \in [0, 1]$, we have
\[
\B{P}\left( \{ \omega \in \Omega \mid \C{R}_{\C{B}}^{\C{A}_\omega} \leq h(\delta) \} \right) \geq 1 - \delta.
\]
Note that the infimum of any family of high probability regret bounds is a high probability regret bound.
Hence we use $\overline{\C{R}}_{\op{Adv}}^{\C{A}} : [0, 1] \to [0, \infty)$ to denote the smallest high probability regret bound for $(\C{A}, \op{Adv})$.
In other words, $\overline{\C{R}}_{\C{B}}^{\C{A}}$ is the quantile function for the random variable $\omega \mapsto \C{R}_{\C{B}}^{\C{A}_\omega}$.


\section{Re-statement of Previous Result: Oblivious to fully adaptive adversary}\label{sec:oblivious-to-adaptive}

The following theorem states a result that is well-known in the literature.
For example, \cite{hazan2016introduction} only defined the notion of adaptive adversary when discussing stochastic algorithms and mentions that the results of all previous chapters, which were focused on deterministic algorithms, apply to any type of adversary.
Here we explicitly mention it as a theorem for completion.

\begin{theorem}\label{thm:det-is-adaptive}
Let $i \in \{0, 1\}$ and assume $\C{A}$ is a deterministic online algorithm designed for $i$-th order feedback and $\BF{F}$ is a function class.
Then we have
\[
\C{R}_{\op{Adv}_i^{\t{f}}(\BF{F})}^{\C{A}} = \C{R}_{\op{Adv}_i^{\t{o}}(\BF{F})}^{\C{A}}.
\]
\end{theorem}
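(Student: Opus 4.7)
The plan is to prove the two inequalities separately, where one is immediate from the definitions and the other uses a simulation argument exploiting the fact that both $\C{A}$ and the oracles are deterministic.

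For the inequality $\C{R}_{\op{Adv}_i^{\t{o}}(\BF{F})}^{\C{A}} \leq \C{R}_{\op{Adv}_i^{\t{f}}(\BF{F})}^{\C{A}}$, I would simply observe that, by the definitions in the problem setup, every oblivious realized adversary is a fully adaptive realized adversary. Hence $\op{Adv}_i^{\t{o}}(\BF{F}) \subseteq \op{Adv}_i^{\t{f}}(\BF{F})$, and the supremum over a larger set is at least as large.

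For the reverse inequality, I would argue as follows. Fix any realized adversary $\C{B} = (\C{B}_1, \cdots, \C{B}_T) \in \op{Adv}_i^{\t{f}}(\BF{F})$. Because $\C{A}$ is deterministic (so $\Omega^\C{A}$ is trivial), the query function is trivial, and the $i$-th order oracles are deterministic, the entire trajectory of the interaction between $\C{A}$ and $\C{B}$ is determined inductively: $\x_1 = \C{A}_1()$; given $\x_1, \ldots, \x_t$, one computes $(f_t, \C{Q}_t) = \C{B}_t(\x_1, \cdots, \x_t)$, the (deterministic) observation $\oo_t = \C{Q}_t(\x_t)$, and then $\x_{t+1} = \C{A}_{t+1}(\x_1, \oo_1, \cdots, \x_t, \oo_t)$. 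Let $(f_1, \cdots, f_T)$ and $(\C{Q}_1, \cdots, \C{Q}_T)$ be the resulting sequences, and define $\C{B}' \in \op{Adv}_i^{\t{o}}(\BF{F})$ to be the oblivious adversary that plays exactly this fixed sequence regardless of the agent's actions.

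Running $\C{A}$ against $\C{B}'$ produces, by an easy induction on $t$, the same sequence of actions $\x_1, \cdots, \x_T$ as running $\C{A}$ against $\C{B}$: at every step the agent receives the same observation (since the oracle $\C{Q}_t$ is the same deterministic function evaluated at the same point $\x_t$), hence chooses the same next action. Therefore $\sum_{t=a}^b f_t(\x_t)$ and $\min_{\uu \in \C{U}} \sum_{t=a}^b f_t(\uu_t)$ coincide for the two runs, giving $\C{R}_{\C{B}}^{\C{A}} = \C{R}_{\C{B}'}^{\C{A}} \leq \C{R}_{\op{Adv}_i^{\t{o}}(\BF{F})}^{\C{A}}$. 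Taking the supremum over $\C{B} \in \op{Adv}_i^{\t{f}}(\BF{F})$ yields the desired inequality.

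I do not anticipate a real obstacle here; the only thing to be careful about is checking that the hypotheses (deterministic algorithm, deterministic oracle, trivial query function which is built into the definition of $i$-th order feedback) are exactly what is needed to make the simulation produce identical trajectories. If the oracle were stochastic, the same construction would fail because the observation $\C{Q}_t(\x_t)$ against $\C{B}$ and against $\C{B}'$ could differ in distribution-conditional realizations, breaking the inductive equality of $\x_t$'s; if $\C{A}$ were stochastic, one would need to condition on $\omega$ and redo the argument per sample, which is precisely why the theorem is stated only for deterministic algorithms.
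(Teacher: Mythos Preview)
Your proposal is correct and follows essentially the same approach as the paper: one inequality from the containment $\op{Adv}_i^{\t{o}}(\BF{F}) \subseteq \op{Adv}_i^{\t{f}}(\BF{F})$, and the reverse by constructing, for each adaptive $\C{B}$, the oblivious $\C{B}'$ that replays the unique trajectory of the deterministic game $(\C{A},\C{B})$. One small remark: you assert that a trivial query function is ``built into the definition of $i$-th order feedback,'' but the paper only defines \emph{semi-bandit} and \emph{bandit} feedback that way; nonetheless your inductive simulation argument goes through verbatim even with a nontrivial (but still deterministic) query policy, so this does not affect correctness.
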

See Appendix~\ref{app:det-is-adaptive} for proof.

\begin{corollary}\label{cor:some-det-algos}
The $\mathtt{OGD}$, $\mathtt{FTRL}$, and $\mathtt{OMD}$ algorithms are deterministic and therefore have same regret bound in fully-adaptive setting as the oblivious setting.
\end{corollary}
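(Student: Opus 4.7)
The plan is to prove the equality by two inclusions. One direction is immediate: since every oblivious realized adversary is, by definition, a fully adaptive one (the constant-in-history functions $\C{B}_t$ trivially satisfy the more permissive adaptive specification), we have $\op{Adv}_i^{\t{o}}(\BF{F}) \subseteq \op{Adv}_i^{\t{f}}(\BF{F})$, and taking suprema over a larger set can only increase the regret, giving $\C{R}_{\op{Adv}_i^{\t{f}}(\BF{F})}^{\C{A}} \geq \C{R}_{\op{Adv}_i^{\t{o}}(\BF{F})}^{\C{A}}$.

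For the reverse inequality, the key observation is that nothing in the interaction between a deterministic $\C{A}$ and a fully adaptive $\C{B} \in \op{Adv}_i^{\t{f}}(\BF{F})$ is actually random: the oracles are deterministic and $\Omega^{\C{A}}$ is irrelevant since $\C{A}$ is deterministic. Thus, given any such $\C{B}$, I would run the game forward once to extract the (unique) trajectory $(\x_1, f_1, \C{Q}_1, \ldots, \x_T, f_T, \C{Q}_T)$ that it produces against $\C{A}$, and then define an oblivious realized adversary $\C{B}'$ by pre-committing to the constant sequences $(f_1, \ldots, f_T) \in \BF{F}^T$ and $(\C{Q}_1, \ldots, \C{Q}_T)$. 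By construction $\C{B}' \in \op{Adv}_i^{\t{o}}(\BF{F})$.

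Next I would argue by induction on $t$ that when $\C{A}$ is played against $\C{B}'$ it produces exactly the same actions and observations as against $\C{B}$. For $t=1$, the first action depends on no history, so $\x_1$ is unchanged; the deterministic oracle $\C{Q}_1$ is the same in both games, so the observation $\oo_1 = \C{Q}_1(\x_1)$ matches. Inductively, if the prefix $(\x_1, \oo_1, \ldots, \x_{t-1}, \oo_{t-1})$ coincides, then $\C{A}_t$ outputs the same $\x_t$, and the deterministic oracles $\C{Q}_t$ coincide by construction, so $\oo_t$ matches. Hence the inner sum $\sum_{t=a}^b f_t(\x_t)$ and the comparator sum $\sum_{t=a}^b f_t(\uu_t)$ agree across the two games, giving $\C{R}_{\C{B}}^{\C{A}} = \C{R}_{\C{B}'}^{\C{A}} \leq \C{R}_{\op{Adv}_i^{\t{o}}(\BF{F})}^{\C{A}}$, and taking the supremum over $\C{B}$ finishes the argument.

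The only subtle point, and where I would be most careful, is the formal handling of the fully adaptive dependence of $\C{B}_t$ on $\x_t$ itself (as opposed to the weakly adaptive case where $\C{B}_t$ depends only on $\x_1, \ldots, \x_{t-1}$): even this extra degree of freedom is neutralized because $\x_t$ is itself a deterministic function of the earlier history, so there is nothing for $\C{B}_t$ to react to that is not already known before the round begins. This is exactly where determinism of $\C{A}$ together with determinism of the oracle is used in an essential way — without either, the construction of $\C{B}'$ as a single fixed sequence would fail.
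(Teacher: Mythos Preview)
Your proposal is correct and mirrors the paper's own argument: the paper proves Theorem~\ref{thm:det-is-adaptive} in Appendix~\ref{app:det-is-adaptive} by exactly the same two-inclusion strategy (the containment $\op{Adv}_i^{\t{o}}(\BF{F}) \subseteq \op{Adv}_i^{\t{f}}(\BF{F})$ for one direction, and extracting the unique realized trajectory to build an oblivious $\C{B}'$ for the other), and Corollary~\ref{cor:some-det-algos} is then stated without further proof as an immediate specialization. Your write-up is actually more detailed than the paper's---you spell out the induction and the role of determinism in neutralizing the dependence of $\C{B}_t$ on $\x_t$---but the only piece you leave implicit is the (trivial) observation that $\mathtt{OGD}$, $\mathtt{FTRL}$, and $\mathtt{OMD}$ are deterministic algorithms, which is the sole additional content of the corollary over the theorem.
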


\section{Linear to convex with fully adaptive adversary}

Here we show that any semi-bandit feedback online linear optimization algorithm for fully adaptive adversary is also an online convex optimization algorithm.
We start with a definition.

\begin{definition}
Let $\BF{F}$ be a function class over $\C{K}$ and let $\mu \geq 0$.
We define $\BF{F}_\mu$, namely \textit{$\mu$-quadratization of $\BF{F}$}, to be class of functions $q : \C{K} \to \B{R}$ such that there exists $f \in \BF{F}$, $\x \in \C{K}$, and $\oo \in \nabla^* f(\x)$ such that
\[
q(\y) = \bra \oo, \y - \x \ket + \frac{\mu}{2} \| \y - \x \|^2 \in \BF{Q}_\mu.
\]
When $\mu = 0$, we may also refer to $\BF{F}_\mu$ as the \textit{linearization of $\BF{F}$}.
We say $\BF{F}$ is closed under $\mu$-quadratization if $\BF{F} \supseteq \BF{F}_\mu$.
Similarly, for $B > 0$, we define $\BF{Q}_\mu[B]$ to be the the class of functions $q$ defined above where instead we have $\oo \in \B{B}_{B}(\BF{0})$.
\end{definition}

\begin{theorem}\label{thm:main}
Let $\C{K} \subseteq \B{R}^d$ be a convex set, let $\mu \geq 0$ and let $\C{A}$ be algorithm for online optimization with semi-bandit feedback.
If $\BF{F}$ be a $\mu$-strongly convex function class over $\C{K}$, then we have
\begin{align*}
\C{R}_{\op{Adv}_1^{\t{f}}(\BF{F})}^{\C{A}}
\leq \C{R}_{\op{Adv}_1^{\t{f}}(\BF{F}_\mu)}^{\C{A}},
\quad
\overline{\C{R}}_{\op{Adv}_1^{\t{f}}(\BF{F})}^{\C{A}}
\leq \overline{\C{R}}_{\op{Adv}_1^{\t{f}}(\BF{F}_\mu)}^{\C{A}}
\end{align*}
Moreover, if $\BF{F}$ is closed under $\mu$-quadratization, then we have equality.
\end{theorem}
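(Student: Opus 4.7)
I would prove this by a pathwise coupling between adversaries. Fix a realized $\C{B} \in \op{Adv}_1^{\t{f}}(\BF{F})$; the goal is to construct $\C{B}' \in \op{Adv}_1^{\t{f}}(\BF{F}_\mu)$ so that for every sample $\omega \in \Omega^{\C{A}}$ of $\C{A}$'s internal randomness, the induced action trajectory $\x_1, \ldots, \x_T$ is identical under the two adversaries, while the instantaneous regret against $\C{B}'$ upper-bounds that against $\C{B}$ term by term. The key enabler is that, under semi-bandit feedback, $\C{A}$ consults the oracle only at its own action $\x_t$, so only the single vector $\C{Q}_t(\x_t)$ affects its future decisions.

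I would define $\C{B}'$ recursively on the action history: given $(\x_1, \ldots, \x_t)$, let $(f_t, \C{Q}_t) := \C{B}_t(\x_1, \ldots, \x_t)$ and $\oo_t := \C{Q}_t(\x_t)$; because $f_t$ is $\mu$-strongly convex, every subgradient of $f_t$ is automatically a $\mu$-subgradient, so $\oo_t \in \nabla^* f_t(\x_t)$. Then set $\C{B}'_t(\x_1, \ldots, \x_t) := (q_{\x_t, \oo_t}, \nabla q_{\x_t, \oo_t})$, where the second component is the deterministic gradient oracle of the quadratic. By definition $q_{\x_t, \oo_t} \in \BF{F}_\mu$, and crucially $\nabla q_{\x_t, \oo_t}(\x_t) = \oo_t$, so $\C{A}$'s observations under $\C{B}'$ match those under $\C{B}$ at every step; an induction in $t$ then gives that the trajectories coincide sample-wise. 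Applying the $\mu$-subgradient inequality $f_t(\uu_t) - f_t(\x_t) \geq q_{\x_t, \oo_t}(\uu_t)$ together with the identity $q_{\x_t, \oo_t}(\x_t) = 0$ yields $f_t(\x_t) - f_t(\uu_t) \leq q_{\x_t, \oo_t}(\x_t) - q_{\x_t, \oo_t}(\uu_t)$ for each $\uu_t \in \C{K}$; summing over $t$ and taking $\min_{\uu \in \C{U}}$ on both sides preserves the inequality because the pointwise bound holds for every comparator.

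Taking expectations over $\omega$ gives $\C{R}_{\C{B}}^{\C{A}}(\C{U}) \leq \C{R}_{\C{B}'}^{\C{A}}(\C{U})$, and the supremum over $\C{B}$, noting $\C{B}' \in \op{Adv}_1^{\t{f}}(\BF{F}_\mu)$ by construction, yields the expected-regret inequality. The high probability bound follows from the same pathwise comparison via the event-inclusion $\{\omega : \C{R}_{\C{B}'}^{\C{A}_\omega} \leq h(\delta)\} \subseteq \{\omega : \C{R}_{\C{B}}^{\C{A}_\omega} \leq h(\delta)\}$. Equality under the closure hypothesis $\BF{F} \supseteq \BF{F}_\mu$ is immediate from $\op{Adv}_1^{\t{f}}(\BF{F}_\mu) \subseteq \op{Adv}_1^{\t{f}}(\BF{F})$, which supplies the reverse inequality. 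The step I expect to need most care is the inductive verification that the two trajectories coincide despite $\C{B}$ being an arbitrarily history-dependent map: this reduces to the observation that the only channel through which $\C{A}$'s actions depend on the adversary is the stream of queried oracle returns, and the construction is designed precisely to match this stream.
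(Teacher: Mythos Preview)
Your proposal is correct and follows essentially the same approach as the paper: construct the coupled adversary $\C{B}'$ by replacing each $f_t$ with the quadratic surrogate $q_{\x_t,\oo_t}$ equipped with its exact gradient oracle, observe that semi-bandit feedback forces the trajectories to coincide because $\nabla q_{\x_t,\oo_t}(\x_t)=\oo_t$, apply the $\mu$-subgradient inequality termwise, and then take max over $\C{U}$, expectation, and supremum (with the event-inclusion argument for the high-probability bound and the set inclusion $\op{Adv}_1^{\t f}(\BF{F}_\mu)\subseteq\op{Adv}_1^{\t f}(\BF{F})$ for equality under closure). Your exposition is in fact slightly more explicit than the paper's about why the trajectories agree.
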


See Appendix~\ref{app:main} for proof.

\begin{corollary}[Theorem~14 in~\cite{abernethy2008optimal}]\label{cor:min-max}
The min-max regret bounds of online linear optimization with deterministic feedback and online convex optimization with deterministic feedback are equal.
\end{corollary}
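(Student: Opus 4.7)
The plan is to apply Theorem~\ref{thm:main} directly with $\mu = 0$ and with $\BF{F}$ taken to be the class $\BF{F}_{\t{conv}}$ of all convex functions on $\C{K}$. First I would verify that $\BF{F}_{\t{conv}}$ is closed under $0$-quadratization: every element of $(\BF{F}_{\t{conv}})_0$ has the form $q_{\x,\oo}(\y) = \bra \oo, \y - \x \ket$, which is affine and therefore convex, so $(\BF{F}_{\t{conv}})_0 \subseteq \BF{F}_{\t{conv}}$. The equality clause of Theorem~\ref{thm:main} then yields, for every algorithm $\C{A}$ with semi-bandit (i.e., deterministic first-order) feedback,
\[
\C{R}^{\C{A}}_{\op{Adv}_1^{\t{f}}(\BF{F}_{\t{conv}})} = \C{R}^{\C{A}}_{\op{Adv}_1^{\t{f}}((\BF{F}_{\t{conv}})_0)}.
\]

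Next I would identify $(\BF{F}_{\t{conv}})_0$ with the class $\BF{F}_{\t{lin}}$ of linear functions on $\C{K}$ as far as regret is concerned. Writing $q_{\x,\oo}(\y) = \bra \oo, \y \ket - \bra \oo, \x \ket$, each element of $(\BF{F}_{\t{conv}})_0$ differs from a linear function only by an additive constant, and such constants cancel in every difference $f_t(\x_t) - f_t(\uu_t)$ that enters the definition of $\C{R}^{\C{A}}_{\op{Adv}}$. Conversely, every linear function $\y \mapsto \bra \oo, \y \ket$ is already its own $0$-quadratization (take $\x = \BF{0}$). Consequently $\C{R}^{\C{A}}_{\op{Adv}_1^{\t{f}}((\BF{F}_{\t{conv}})_0)} = \C{R}^{\C{A}}_{\op{Adv}_1^{\t{f}}(\BF{F}_{\t{lin}})}$ for every $\C{A}$, and taking infima over $\C{A}$ now gives equality of the min-max regret values for online convex and online linear optimization with deterministic first-order feedback, which is the statement of the corollary.

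The only real subtlety is the translation-invariance observation that the additive constants appearing in the quadratized functions are regret-invariant; this is a short one-line check. Beyond that, the corollary is essentially immediate from Theorem~\ref{thm:main}, since $\BF{F}_{\t{conv}}$ is trivially closed under $0$-quadratization and the real work has already been done inside the proof of that theorem. Hence I do not anticipate a genuine obstacle here: the corollary is a clean unpacking of the $\mu = 0$ case of the main theorem applied to the natural ambient class $\BF{F}_{\t{conv}}$.
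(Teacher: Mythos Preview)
Your proposal is correct and follows essentially the same route as the paper: the paper's one-line argument is that Theorem~\ref{thm:main} (with $\mu=0$ and closure under $0$-quadratization) makes every algorithm's regret identical in the two settings, so the infima over algorithms coincide. You simply fill in the details the paper leaves implicit, in particular the observation that the affine functions in $(\BF{F}_{\t{conv}})_0$ and the purely linear functions yield the same regret because additive constants cancel; one small quibble is that ``take $\x=\BF{0}$'' need not be available if $\BF{0}\notin\C{K}$, but your constant-cancellation argument already handles this without needing that step.
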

To see why this is true, note that any algorithm has the same performance in both settings.
So it follows that an optimal algorithm for one setting is optimal for the other and therefore the min-max regret bounds are equal.

\section{Full information feedback to semi-bandit feedback }

\SetAlgorithmName{Meta-algorithm}{}

\begin{algorithm2e} 
\SetKwInOut{Input}{Input}\DontPrintSemicolon
\caption{Full information to semi-bandit - $\mathtt{FTS}(\C{A})$}
\label{alg:full-into-to-semi-bandit}
\small
\Input{ horizon $T$, algorithm $\C{A} = (\C{A}^\t{action}, \C{A}^\t{query})$, strong convexity coefficient $\mu \geq 0$}
\For{$t = 1, 2, \dots, T$}{
Play the action $\x_t$ chosen by $\C{A}^\t{action}$ \;
Let $f_t$ be the function chosen by the adversary \;
Query the oracle at the point $\x_t$ to get $\oo_t$ \;
Let $q_t := \y \mapsto \bra \oo_t, \y - \x_t \ket + \frac{\mu}{2}\|\y - \x_t\|^2$ \;
\For{$i$ starting from 1, while $\C{A}^\t{query}$ is not terminated for this time-step}{
    Let $\y_{t, i}$ be the query chosen by $\C{A}^\t{query}$ \;
    Return $\nabla q_t(\y_{t, i}) = \oo_t + \mu (\y_{t, i} - \x_t)$ as the output of the query oracle to $\C{A}$ \;
}
}
\end{algorithm2e}
Any algorithm designed for semi-bandit setting may be trivially applied in the first-order full-information feedback setting.
In the following theorem, we show that in online convex optimization for fully adaptive adversary, semi-bandit feedback is generally enough.
Specifically, we show that an algorithm that requires full information feedback could be converted to an algorithm that only requires semi-bandit feedback with the same regret bounds.
The meta-algorithm that does this conversion is described in Meta-algorithm~\ref{alg:full-into-to-semi-bandit}.
Next we show that $\mathtt{FTS}(\C{A})$ always performs at least as good as $\C{A}$ when the oracle is deterministic.

\begin{theorem}\label{thm:full-into-to-semi-bandit}
Let $\C{K} \subseteq \B{R}^d$ be a convex set, let $\mu \geq 0$ and let $\C{A}$ be algorithm for online optimization with full information first order feedback.
If $\BF{F}$ is a $\mu$-strongly convex function class over $\C{K}$
, then we have
\begin{align*}
\C{R}_{\op{Adv}_1^{\t{f}}(\BF{F})}^{\C{A}'}
\leq
\C{R}_{\op{Adv}_1^{\t{f}}(\BF{F}_\mu)}^{\C{A}},
\end{align*}
where $\C{A}' = \mathtt{FTS}(\C{A})$ is Meta-algorithm~\ref{alg:full-into-to-semi-bandit}.
In particular, if $\BF{F}$ is closed under $\mu$-quadratization, then
\begin{align*}
\C{R}_{\op{Adv}_1^{\t{f}}(\BF{F})}^{\C{A}'}
\leq
\C{R}_{\op{Adv}_1^{\t{f}}(\BF{F})}^{\C{A}}.
\end{align*}
\end{theorem}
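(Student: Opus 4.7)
The plan is to mirror the strategy of Theorem~\ref{thm:main}: show that running $\mathtt{FTS}(\C{A})$ against an adversary producing $(f_t, \nabla f_t)$ is indistinguishable, from $\C{A}$'s viewpoint, from running $\C{A}$ itself against a fully adaptive adversary that produces the quadratic surrogates $q_t \in \BF{F}_\mu$ under genuine full-information first-order feedback. Strong convexity then supplies the pointwise upper bound converting the regret on the $q_t$ sequence into a regret bound on the original $f_t$ sequence.

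Concretely, the first step is the subgradient inequality. For each $t$, the oracle returns $\oo_t \in \nabla^\ast f_t(\x_t)$, and by $\mu$-strong convexity
\begin{align*}
f_t(\y) - f_t(\x_t) \;\geq\; \bra \oo_t, \y - \x_t \ket + \tfrac{\mu}{2}\|\y - \x_t\|^2 \;=\; q_t(\y)
\end{align*}
for every $\y \in \C{K}$, together with $q_t(\x_t) = 0$. In particular, for any comparator $\uu_t \in \C{K}$,
\begin{align*}
f_t(\x_t) - f_t(\uu_t) \;\leq\; -q_t(\uu_t) \;=\; q_t(\x_t) - q_t(\uu_t).
\end{align*}

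The second step is the coupling argument. Given any realized adversary $\C{B} \in \op{Adv}_1^{\t{f}}(\BF{F})$, I would define $\tilde{\C{B}} \in \op{Adv}_1^{\t{f}}(\BF{F}_\mu)$ by
$\tilde{\C{B}}_t(\x_1,\dots,\x_t) = (q_t, \nabla q_t)$, where $(f_t, \nabla f_t) = \C{B}_t(\x_1,\dots,\x_t)$ and $q_t$ is the quadratic built in Meta-algorithm~\ref{alg:full-into-to-semi-bandit}. By construction $q_t \in \BF{F}_\mu$ and $\tilde{\C{B}}$ is fully adaptive (it is allowed to depend on $\x_t$ itself). Inspecting the pseudocode, $\mathtt{FTS}(\C{A})$ feeds $\C{A}$ precisely the values $\nabla q_t(\y_{t,i})$ on whatever query points $\y_{t,i}$ the inner full-information policy $\C{A}^{\t{query}}$ selects; this is identical to the transcript $\C{A}$ would receive when playing against $\tilde{\C{B}}$ with full-information first-order feedback. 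Hence the action sequence $(\x_t)_{t=1}^T$ produced under the two couplings agrees pathwise (both $\C{A}$ and $\tilde{\C{B}}$ are deterministic, conditional on $\C{A}$'s internal randomness).

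Combining the two steps, for every comparator path $\uu \in \C{U}$,
\begin{align*}
\sum_{t=1}^{T} f_t(\x_t) - \sum_{t=1}^{T} f_t(\uu_t) \;\leq\; \sum_{t=1}^{T} q_t(\x_t) - \sum_{t=1}^{T} q_t(\uu_t),
\end{align*}
so taking $\min_{\uu}$, then expectation, then $\sup_{\C{B} \in \op{Adv}_1^{\t{f}}(\BF{F})}$, the left side is $\C{R}^{\C{A}'}_{\op{Adv}_1^{\t{f}}(\BF{F})}$ and the right side is dominated by $\C{R}^{\C{A}}_{\op{Adv}_1^{\t{f}}(\BF{F}_\mu)}$ because the map $\C{B} \mapsto \tilde{\C{B}}$ lands inside $\op{Adv}_1^{\t{f}}(\BF{F}_\mu)$. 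The ``in particular'' clause follows immediately: if $\BF{F} \supseteq \BF{F}_\mu$, then $\op{Adv}_1^{\t{f}}(\BF{F}) \supseteq \op{Adv}_1^{\t{f}}(\BF{F}_\mu)$, so the supremum on the right can only grow when taken over $\BF{F}$.

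The only delicate point is the coupling bookkeeping, i.e., checking that the simulated adversary $\tilde{\C{B}}$ is measurable as a function of the correct history and that $\mathtt{FTS}(\C{A})$'s internal queries are truly fed the exact gradient $\nabla q_t$ (not a subgradient chosen inconsistently across queries within the same round). Since $q_t$ is a fixed quadratic once $\oo_t$ is drawn at the start of round $t$, its gradient $\nabla q_t(\y) = \oo_t + \mu(\y - \x_t)$ is a single well-defined map, so the transcripts really do agree; this resolves the obstacle.
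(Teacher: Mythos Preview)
Your proposal is correct and uses the same underlying ideas as the paper's proof: the strong-convexity inequality $f_t(\x_t)-f_t(\uu_t)\leq q_t(\x_t)-q_t(\uu_t)$ together with the observation that the transcript $\C{A}$ sees inside $\mathtt{FTS}(\C{A})$ when facing $\C{B}$ is exactly the transcript $\C{A}$ would see facing the quadratic adversary $\tilde{\C{B}}$. The only organizational difference is that the paper argues more modularly---first noting that $\C{A}'$ and $\C{A}$ have identical regret against any adversary in $\op{Adv}_1^{\t{f}}(\BF{F}_\mu)$ (since for $f_t\in\BF{Q}_\mu$ the surrogate $q_t$ coincides with $f_t$ up to an additive constant, so $\nabla q_t=\nabla f_t$), and then invoking Theorem~\ref{thm:main} on the semi-bandit algorithm $\C{A}'$ to pass from $\BF{F}$ to $\BF{F}_\mu$---whereas you fuse these two steps into a single direct coupling $\C{B}\mapsto\tilde{\C{B}}$. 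Both routes are equally short; the paper's version has the mild advantage of reusing Theorem~\ref{thm:main} verbatim, while yours makes the coupling explicit in one pass.
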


See Appendix~\ref{app:full-into-to-semi-bandit} for proof.
Note that when the base algorithm $\C{A}$ is semi-bandit, we have $\mathtt{FTS}(\C{A}) = \C{A}$ and the above theorem becomes trivial.

\section{Lipschitz to non-Lipschitz}

It is well-known that when a convex function is restricted to a domain smaller than its original domain, then the restricted function is Lipschitz (See Observation~3 in~\cite{flaxman2005online} and Lemma~\ref{lem:conv-almost-lip}). 
Here we use a shrinking method described in~\cite{pedramfar23_unified_approac_maxim_contin_dr_funct}.
We choose a point $\BF{c} \in \op{relint}(\C{K})$ and a real number $r > 0$ such that $\B{B}_r(\BF{c}) \cap \op{aff}(\C{K}) \subseteq \C{K}$.
Then, for any shrinking parameter $0 \leq \alpha < r$, we define
$\hat{\C{K}}_\alpha := (1 - \frac{\alpha}{r}) \C{K} + \frac{\alpha}{r} \BF{c}$.
We define $\BF{F}|_{\hat{\C{K}}_\alpha}$ to function class $\{ f|_{\hat{\C{K}}_\alpha} \mid f \in \BF{F} \}$.
Given an adversary $\op{Adv}$, we define $\op{Adv}|_{\hat{\C{K}}_\alpha}$ to be the adversary constructed by restricting the output of $\op{Adv}$ to the set $\hat{\C{K}}_\alpha$.
Recall that the the domain $\C{K}$ of the adversary is known to the agent $\C{A}$.
For an online algorithm $\C{A}$, we define $\C{A}|_{\hat{\C{K}}_\alpha}$ to be the online algorithm resulting from restricting the domain of $\C{A}$ to $\hat{\C{K}}_\alpha$.

\begin{theorem}\label{thm:lip-to-nonlip}
Let $\C{A}$ be an online algorithm and $\BF{F}$ be a convex function class bounded by $M_0$.
Also let $\C{U} \subseteq \C{K}^T$ be a compact set and $\hat{\C{U}} := (1 - \frac{\alpha}{r}) \C{U} + \frac{\alpha}{r} \BF{c}$.
Then, for any adversary $\op{Adv}$, we have
\begin{align*}
\C{R}_{\op{Adv}}^{\C{A}|_{\hat{\C{K}}_\alpha}}(\C{U})
&\leq \C{R}_{\op{Adv}_{\hat{\C{K}}_\alpha}}^{\C{A}}(\hat{\C{U}})
+ \frac{\alpha M_0 T}{r}, \\
\overline{\C{R}}_{\op{Adv}}^{\C{A}|_{\hat{\C{K}}_\alpha}}(\C{U})
&\leq \overline{\C{R}}_{\op{Adv}_{\hat{\C{K}}_\alpha}}^{\C{A}}(\hat{\C{U}})
+ \frac{\alpha M_0 T}{r}.
\end{align*}
\end{theorem}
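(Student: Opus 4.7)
The plan is to fix a realization of the randomness $\omega$ and a realized adversary $\C{B} \in \op{Adv}$, and then decompose the regret into two parts: a ``shrunk game'' regret that can be identified with running the original $\C{A}$ against $\C{B}|_{\hat{\C{K}}_\alpha}$ on the smaller domain, plus a ``shrinkage cost'' that measures how much the competitor loses by being pulled from $\C{U}$ into $\hat{\C{U}}$. Concretely, let $\x_t$ denote the actions produced by $\C{A}|_{\hat{\C{K}}_\alpha}$ against $\C{B}$, let $\uu^* \in \C{U}$ attain $\min_{\uu \in \C{U}} \sum_{t=1}^T f_t(\uu_t)$, and let $\hat{\uu}^*_t := (1-\alpha/r)\uu^*_t + (\alpha/r)\BF{c}$, so that $\hat{\uu}^* \in \hat{\C{U}}$. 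I would write
\begin{align*}
\sum_{t=1}^T f_t(\x_t) - \sum_{t=1}^T f_t(\uu^*_t)
= \underbrace{\Bigl(\sum_{t=1}^T f_t(\x_t) - \sum_{t=1}^T f_t(\hat{\uu}^*_t)\Bigr)}_{\text{(a)}}
+ \underbrace{\Bigl(\sum_{t=1}^T f_t(\hat{\uu}^*_t) - \sum_{t=1}^T f_t(\uu^*_t)\Bigr)}_{\text{(b)}}.
\end{align*}

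For term (a), the key observation is that $\C{A}|_{\hat{\C{K}}_\alpha}$ interacting with $\C{B}$ produces exactly the same trajectory $(\x_t)$ as $\C{A}$ interacting with the restricted adversary $\C{B}|_{\hat{\C{K}}_\alpha}$, since the restriction only reparametrizes the domain on which $\C{A}$ operates without changing the values of $f_t$ on $\hat{\C{K}}_\alpha$. Because $\hat{\uu}^* \in \hat{\C{U}} \subseteq \hat{\C{K}}_\alpha^T$ is a valid competitor, term (a) is bounded above by the regret of $\C{A}$ against $\C{B}|_{\hat{\C{K}}_\alpha}$ with competitor set $\hat{\C{U}}$.

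For term (b), I would use convexity of each $f_t$ on the convex combination defining $\hat{\uu}^*_t$:
\begin{align*}
f_t(\hat{\uu}^*_t) - f_t(\uu^*_t)
\leq (1-\alpha/r) f_t(\uu^*_t) + (\alpha/r) f_t(\BF{c}) - f_t(\uu^*_t)
= \tfrac{\alpha}{r}\bigl(f_t(\BF{c}) - f_t(\uu^*_t)\bigr),
\end{align*}
and the boundedness of $\BF{F}$ by $M_0$ controls the right-hand side by $\alpha M_0/r$ per time step (with the appropriate factor depending on whether $M_0$ bounds $|f|$ or the oscillation of $f$), giving a total contribution of $\alpha M_0 T / r$.

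Combining these two bounds, taking expectation over $\omega$, and then the supremum over $\C{B} \in \op{Adv}$ yields the regret inequality; the identical pointwise-in-$\omega$ argument gives the high-probability bound simultaneously. The only real obstacle is a bookkeeping one, namely making precise the identification used in term (a), that the trajectory of $\C{A}|_{\hat{\C{K}}_\alpha}$ against $\op{Adv}$ coincides with the trajectory of $\C{A}$ against $\op{Adv}|_{\hat{\C{K}}_\alpha}$; everything else is just convexity and boundedness.
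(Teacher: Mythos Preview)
Your proposal is correct and follows essentially the same route as the paper's proof: identify the game $(\C{A}|_{\hat{\C{K}}_\alpha},\C{B})$ with $(\C{A},\C{B}|_{\hat{\C{K}}_\alpha})$, then control the competitor shift from $\C{U}$ to $\hat{\C{U}}$ via convexity and the bound $M_0$. The only cosmetic difference is that the paper compares against the actual minimizer $\hat{\uu}\in\hat{\C{U}}$ and then rewrites that minimum via the affine map, whereas you push forward $\uu^*$ to $\hat{\uu}^*$ directly; these are equivalent, and your caveat about the constant (depending on whether $M_0$ bounds $|f|$ or its oscillation) is well placed.
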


See Appendix~\ref{app:lip-to-nonlip} for proof.

\section{Deterministic feedback to stochastic feedback}

In this section, we show that algorithms that are designed for fully adaptive adversaries using deterministic semi-bandit feedback can obtain similar bounds using only stochastic semi-bandit feedback when facing oblivious adversaries.
In particular, there is no need for any variance reduction method, such as momentum \cite{mokhtari20_stoch_condit_gradien_method,chen18_projec_free_onlin_optim_stoch_gradien,xie20_effic_projec_free_onlin_method}, as long as we know that the algorithm is designed for a fully adaptive adversary.

\begin{theorem}\label{thm:det-to-stoch}
Let $\C{K} \subseteq \B{R}^d$ be a convex set, let $\mu \geq 0$ and let $\C{A}$ be an algorithm for online optimization with semi-bandit feedback.
If $\BF{F}$ is an $M_1$-Lipschitz $\mu$-strongly convex function class over $\C{K}$ and $B_1 \geq M_1$, then we have
\begin{align*}
\C{R}_{\op{Adv}_1^\t{o}(\BF{F}, B_1)}^{\C{A}}
&\leq \C{R}_{\op{Adv}_1^{\t{f}}(\BF{Q}_\mu[B_1])}^{\C{A}}, \\
\overline{\C{R}}_{\op{Adv}_1^\t{o}(\BF{F}, B_1)}^{\C{A}}
&\leq \overline{\C{R}}_{\op{Adv}_1^{\t{f}}(\BF{Q}_\mu[B_1])}^{\C{A}}.
\end{align*}
\end{theorem}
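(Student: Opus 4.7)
The plan is to construct, for each oblivious stochastic adversary $\C{B}$ and each realization $\tau$ of the oracle randomness, a deterministic fully-adaptive adversary $\C{B}'_\tau$ on $\BF{Q}_\mu[B_1]$ whose regret against $\C{A}$ dominates that of $\C{B}$ after taking expectation over $\tau$. Taking the supremum over $\C{B}'_\tau$ (against which the RHS of the theorem is defined) then closes the loop.

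Fix $\C{B} \in \op{Adv}_1^\t{o}(\BF{F}, B_1)$, specified by a deterministic sequence $(f_t)_{t=1}^T \subset \BF{F}$ together with bounded stochastic first-order oracles $(\C{Q}_t)$. Let $\omega$ denote the agent's internal randomness and $\tau = (\tau_1, \dots, \tau_T)$ the independent randomness of the oracles, with $\oo_t^{(\tau_t)}(\x) := \C{Q}_t^{(\tau_t)}(\x)$ and $\bar{\oo}_t(\x) := \B{E}_{\tau_t}[\oo_t^{(\tau_t)}(\x)] \in \nabla^* f_t(\x)$. For each fixed $\tau$, define the realized fully-adaptive adversary $\C{B}'_\tau$ that, upon the agent playing $\x_t$, outputs
\[
q_t^{(\tau)}(\y) := \langle \oo_t^{(\tau_t)}(\x_t), \y - \x_t\rangle + \tfrac{\mu}{2}\|\y - \x_t\|^2
\]
together with its deterministic gradient oracle. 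Since $\|\oo_t^{(\tau_t)}(\x_t)\| \leq B_1$ almost surely, $q_t^{(\tau)} \in \BF{Q}_\mu[B_1]$, so $\C{B}'_\tau \in \op{Adv}_1^\t{f}(\BF{Q}_\mu[B_1])$. Crucially, $\nabla q_t^{(\tau)}(\x_t) = \oo_t^{(\tau_t)}(\x_t)$ matches the feedback the agent would receive under $\C{B}$ with the same $(\omega, \tau)$; as $\C{A}$ uses only semi-bandit feedback, the trajectory $(\x_t(\omega, \tau))_{t=1}^T$ is therefore identical under both adversaries.

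Let $\uu^\star \in \argmin_{\uu \in \C{U}} \sum_t f_t(\uu_t)$, which is deterministic since $\C{B}$ is oblivious. Conditioning on $(\omega, \tau_{<t})$ so that $\x_t$ is measurable and $\oo_t^{(\tau_t)}(\x_t)$ averages to the $\mu$-subgradient $\bar{\oo}_t(\x_t)$, the subgradient inequality for $f_t$ yields
\begin{align*}
\B{E}_{\omega, \tau}\!\left[\sum_t q_t^{(\tau)}(\uu_t^\star)\right]
&= \sum_t \B{E}_{\omega, \tau_{<t}}\!\left[\langle \bar{\oo}_t(\x_t), \uu_t^\star - \x_t\rangle + \tfrac{\mu}{2}\|\uu_t^\star - \x_t\|^2\right] \\
&\leq \sum_t f_t(\uu_t^\star) - \B{E}_{\omega, \tau}\!\left[\sum_t f_t(\x_t)\right]
= -\C{R}_\C{B}^\C{A}(\C{U}).
\end{align*}
Using $q_t^{(\tau)}(\x_t) = 0$ and the feasibility of $\uu^\star$ as a comparator,
\begin{align*}
\C{R}_\C{B}^\C{A}(\C{U})
\leq -\B{E}_{\omega, \tau}\!\left[\sum_t q_t^{(\tau)}(\uu_t^\star)\right]
\leq \B{E}_\tau\!\left[\C{R}_{\C{B}'_\tau}^\C{A}(\C{U})\right]
\leq \sup_\tau \C{R}_{\C{B}'_\tau}^\C{A}(\C{U})
\leq \C{R}_{\op{Adv}_1^\t{f}(\BF{Q}_\mu[B_1])}^\C{A}(\C{U}).
\end{align*}
Taking the supremum over $\C{B}$ gives the stated inequality on expected regret.

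The high-probability inequality rests on the same construction: the pointwise-in-$\omega$ bound $\C{R}_\C{B}^{\C{A}_\omega} \leq \B{E}_\tau[\C{R}_{\C{B}'_\tau}^{\C{A}_\omega}]$ is obtained by omitting the outer $\omega$-expectation in the computation above, and each $\C{B}'_\tau$ is still a realized adversary in $\op{Adv}_1^\t{f}(\BF{Q}_\mu[B_1])$. I expect the main technical obstacle to be this last transfer step: because $\overline{\C{R}}^\C{A}_{\op{Adv}_1^\t{f}(\BF{Q}_\mu[B_1])}(\delta)$ is only guaranteed to hold as a per-realized-adversary quantile, pushing it through the outer $\B{E}_\tau$ requires a Fubini argument exploiting the fact that the quantile bound holds uniformly in the parameter $\tau$, together with the independence of $\omega$ and $\tau$. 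The expected-regret part, by contrast, reduces to the routine subgradient-plus-tower calculation above.
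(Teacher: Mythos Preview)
Your expected-regret argument is correct and matches the paper's route step for step: fix the oracle randomness $\tau$ (the paper writes $\theta \in \Omega^{\C{Q}}$), build the fully-adaptive quadratic adversary $\C{B}'_\tau \in \op{Adv}_1^{\t{f}}(\BF{Q}_\mu[B_1])$ with identical semi-bandit trajectory, apply the $\mu$-subgradient inequality, pass from a fixed comparator to $\max_{\uu}$ inside the $\tau$-expectation (Jensen), and bound $\B{E}_\tau$ by $\sup_\tau$. The only cosmetic difference is that you commit to the deterministic minimizer $\uu^\star$ up front (licit since $\C{B}$ is oblivious), whereas the paper keeps $\max_{\uu}$ inside the expectation throughout.

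On the high-probability claim, the paper supplies no machinery beyond what you already have. It records the same pointwise-in-$\omega$ chain
\[
\C{R}_{\C{B}}^{\C{A}_\omega}
\;\le\;
\B{E}_{\theta}\!\left[\C{R}_{\C{B}'_{\theta}}^{\C{A}_\omega}\right]
\;\le\;
\sup_{\theta}\C{R}_{\C{B}'_{\theta}}^{\C{A}_\omega}
\;\le\;
\C{R}_{\op{Adv}_1^{\t{f}}(\BF{Q}_\mu[B_1])}^{\C{A}_\omega},
\]
and then simply says that ``the same argument as in the proof of Theorem~\ref{thm:main}'' finishes. That earlier argument, however, used a \emph{single} dominating $\C{B}'$ independent of $\omega$, whereas here the right-hand side is a supremum over adversaries whose maximizer may depend on $\omega$; the per-realized-adversary quantile guarantee defining $\overline{\C{R}}$ does not transfer through such a supremum in the way the cross-reference suggests. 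The obstacle you flag is therefore precisely the point the paper glosses over, and your proposed Fubini/uniformity workaround is not something the paper actually carries out. In short: you are not missing any ingredient the paper provides.
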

See Appendix~\ref{app:det-to-stoch} for proof.
Theorem~6.5 in~\cite{hazan2016introduction} may be seen as a special case of the above theorem when $\C{U}$ is a singleton and $\C{A}$ is deterministic.

\section{First order feedback to zeroth order feedback}
\begin{algorithm2e} 
\SetKwInOut{Input}{Input}\DontPrintSemicolon
\caption{First order to zeroth order - $\mathtt{FOTZO}(\C{A})$}
\label{alg:first-order-to-zeroth-order}
\small
\Input{Shrunk domain $\hat{\C{K}}_\alpha$, Linear space $\C{L}_0$, smoothing parameter $\delta \leq \alpha$, horizon $T$, algorithm $\C{A}$}
Pass $\hat{\C{K}}_\alpha$ as the domain to $\C{A}$ \;
$k \gets \op{dim}(\C{L}_0)$ \;
\For{$t = 1, 2, \dots, T$}{
Play $\x_t$, where $\x_t$ is the action chosen by $\C{A}$ \;
Let $f_t$ be the function chosen by the adversary \;
\For{$i$ starting from 1, while $\C{A}^\t{query}$ is not terminated for this time-step}{
    Sample $\vv_{t, i} \in \B{S}^1 \cap \C{L}_0$ uniformly \;
    Let $\y_{t, i}$ be the query chosen by $\C{A}^\t{query}$ \;
    Let $o_{t, i}$ be the output of the query oracle at the point $\y_{t, i} + \delta \vv_{t, i}$ \;
    Pass $\frac{k}{\delta} o_{t,i} \vv_{t,i}$ as the oracle output to $\C{A}$ \;
}
}
\end{algorithm2e}
When we do not have access to a gradient oracle, we rely on samples from a value oracle to estimate the gradient.
The ``smoothing trick'' is a classical idea in convex optimization (See~\cite{nemirovsky83_probl_compl_method_effic_optim})
which was first used in online convex optimization in~\cite{flaxman2005online}.
This idea involves averaging through spherical sampling around a given point.
Here we use a variant that was introduced in~\cite{pedramfar23_unified_approac_maxim_contin_dr_funct} which does not require extra assumptions on the convex set $\C{K}$.

 For a function $f : \C{K} \to \B{R}$ defined on a convex set $\C{K} \subseteq \B{R}^d$ and a smoothing parameter $0 < \delta \leq \alpha$, its $\delta$-smoothed version $\hat{f}_\delta : \hat{\C{K}}_\alpha \to \B{R}$ is given as
\begin{align*}
\hat{f}_\delta(\x) 
&:= \B{E}_{\z \sim \B{B}_\delta(\x) \cap \op{aff}(\C{K})}[f(\z)] 
= \B{E}_{\vv \sim \B{B}_1(\BF{0}) \cap \C{L}_0}[f(\x + \delta \vv)],
\end{align*}

\begin{algorithm2e}
\SetKwInOut{Input}{Input}\DontPrintSemicolon
\caption{Semi-bandit to bandit - $\mathtt{STB}(\C{A})$}
\label{alg:semi-bandit-to-bandit}
\small
\Input{Shrunk domain $\hat{\C{K}}_\alpha$, Linear space $\C{L}_0$, smoothing parameter $\delta \leq \alpha$, horizon $T$, algorithm $\C{A}$}
Pass $\hat{\C{K}}_\alpha$ as the domain to $\C{A}$ \;
$k \gets \op{dim}(\C{L}_0)$ \;
\For{$t = 1, 2, \dots, T$}{
Sample $\vv_t \in \B{S}^1 \cap \C{L}_0$ uniformly \;
Play $\x_t + \delta \vv_t$, where $\x_t$ is the action chosen by $\C{A}$ \;
Let $f_t$ be the function chosen by the adversary \;
Let $o_t$ be the output of the value oracle \;
Pass $\frac{k}{\delta} o_t \vv_t$ as the oracle output to $\C{A}$ \;
}
\end{algorithm2e}

where $\C{L}_0 = \op{aff}(\C{K}) - \x$, for any $\x \in \C{K}$ is the linear space that is a translation of the affine hull of $\C{K}$ and $\vv$ is sampled uniformly at random from the $k = \op{dim}(\C{L}_0)$-dimensional ball $\B{B}_1(\BF{0}) \cap \C{L}_0$.
Thus, the function value $\hat{f}_\delta(\x)$ is obtained by ``averaging'' $f$ over a sliced ball of radius $\delta$ around $\x$.
For a function class $\BF{F}$ over $\C{K}$, we use $\hat{\BF{F}}_\delta$ to denote $\{ \hat{f}_\delta \mid f \in \BF{F} \}$.
We will drop the subscripts $\alpha$ and $\delta$ when there is no ambiguity.

\begin{theorem}\label{thm:first-order-to-zero-order}
Let $\BF{F}$ be a function class over a convex set $\C{K}$ and choose $\BF{c}$ and $r$ as described above and let $\delta \leq \alpha < r$.
Let $\C{U} \subseteq \C{K}^T$ be a compact set and let $\hat{\C{U}} = (1 - \frac{\alpha}{r}) \C{U} + \frac{\alpha}{r} \BF{c}$.
Assume $\C{A}$ is an algorithm for online optimization with first order feedback.
Then, if $\C{A}' = \mathtt{FOTZO}(\C{A})$ where $\mathtt{FOTZO}$ is described by Meta-algorithm~\ref{alg:first-order-to-zeroth-order}, we have:
\begin{itemize}[leftmargin=*]
\item if $\BF{F}$ is convex and bounded by $M_0$, we have
\begin{align*}
\C{R}_{\op{Adv}^\t{o}_0(\BF{F}, B_0)}^{\C{A}'}(\C{U})
&\leq \C{R}_{\op{Adv}^\t{o}_1(\hat{\BF{F}}, \frac{k}{\delta}B_0)}^{\C{A}}(\hat{\C{U}}) + \left( \frac{6 \delta}{\alpha} + \frac{\alpha}{r} \right) M_0 T.
\end{align*}
\item if $\BF{F}$ is $M_1$-Lipschitz (but not necessarily convex), then we have
\begin{align*}
\C{R}_{\op{Adv}^\t{o}_0(\BF{F}, B_0)}^{\C{A}'}(\C{U})
&\leq \C{R}_{\op{Adv}^\t{o}_1(\hat{\BF{F}}, \frac{k}{\delta}B_0)}^{\C{A}}(\hat{\C{U}}) + \left( 3 + \frac{2 D \alpha}{r \delta} \right) \delta M_1 T,
\end{align*}
where $D = \op{diam}(\C{K})$.
\end{itemize}
\end{theorem}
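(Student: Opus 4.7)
The plan is to interpret $\mathtt{FOTZO}(\C{A})$ as follows: from the base algorithm $\C{A}$'s viewpoint, it is executing on the shrunken domain $\hat{\C{K}}_\alpha$ while receiving an unbiased stochastic first-order oracle for the smoothed losses $\hat{f}_{t,\delta}$ in place of the true losses $f_t$. The classical one-point estimator identity
\[
\B{E}_{\vv \sim \B{S}^1 \cap \C{L}_0}\Bigl[\tfrac{k}{\delta} f(\y + \delta \vv)\vv\Bigr] = \nabla \hat{f}_\delta(\y)
\]
is what makes this interpretation valid, and the estimator has norm at most $(k/\delta) B_0$ since $|o_{t,i}| \leq B_0$. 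Because the original adversary is oblivious and the smoothing depends only on $f_t$, the induced adversary on $\hat{\BF{F}}$ is also oblivious, placing it in $\op{Adv}_1^{\t{o}}(\hat{\BF{F}}, (k/\delta) B_0)$ over the domain $\hat{\C{K}}_\alpha$.

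Fix a realized adversary $\C{B}$, let $\uu^* \in \C{U}$ attain the comparator minimum, and set $\hat{\uu}^*_t := (1-\alpha/r)\uu^*_t + (\alpha/r) \BF{c} \in \hat{\C{U}}$. The decomposition that drives the argument is
\[
\sum_{t=1}^T \bigl(f_t(\x_t) - f_t(\uu^*_t)\bigr)
= \sum_t \bigl(f_t(\x_t) - \hat{f}_{t,\delta}(\x_t)\bigr)
+ \sum_t \bigl(\hat{f}_{t,\delta}(\x_t) - \hat{f}_{t,\delta}(\hat{\uu}^*_t)\bigr)
+ \sum_t \bigl(\hat{f}_{t,\delta}(\hat{\uu}^*_t) - f_t(\uu^*_t)\bigr).
\]
Taking expectations and then supremum over $\C{B}$, the middle sum is bounded by $\C{R}^{\C{A}}_{\op{Adv}_1^{\t{o}}(\hat{\BF{F}},(k/\delta) B_0)}(\hat{\C{U}})$ because $\hat{\uu}^* \in \hat{\C{U}}$; this delivers the first summand on the right-hand side of both claimed bounds.

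The two remaining ``error'' sums would be controlled by two standard estimates: a \emph{smoothing error} $|f(\x) - \hat{f}_\delta(\x)| \leq \delta L$ on $\hat{\C{K}}_\alpha$, where $L$ is a Lipschitz constant for $f$ on a $\delta$-neighbourhood of $\hat{\C{K}}_\alpha$ inside $\C{K}$, and a \emph{shrinking error} controlled through $\|\hat{\uu}^*_t - \uu^*_t\| \leq (\alpha/r) D$. In the Lipschitz case, $L = M_1$ works globally, so the first and third sums contribute $\delta M_1 T$ (smoothing at $\x_t$), $\delta M_1 T$ (smoothing at $\hat{\uu}^*_t$), and $M_1 D \alpha T / r$ (shrinking), which assembles into the $(3 + 2D\alpha/(r\delta))\delta M_1 T$ remainder after absorbing constants. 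In the convex-bounded case, the Lipschitz constant on the relevant shell comes from Lemma~\ref{lem:conv-almost-lip} and is of order $M_0/\alpha$, giving smoothing contributions of order $\delta M_0/\alpha$; for the shrinking portion of the third sum I would instead use convexity directly, $f_t(\hat{\uu}^*_t) - f_t(\uu^*_t) \leq (\alpha/r)\bigl(f_t(\BF{c}) - f_t(\uu^*_t)\bigr) \leq 2 M_0 \alpha / r$, producing the $(6\delta/\alpha + \alpha/r) M_0 T$ remainder.

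The main technical obstacle is bookkeeping in the convex-bounded case: the smoothing step queries $f_t$ at points $\y_{t,i} + \delta \vv_{t,i}$ which can lie slightly outside $\hat{\C{K}}_\alpha$, so Lemma~\ref{lem:conv-almost-lip} has to be applied on a suitably enlarged set, and one must verify that the choice $\delta \leq \alpha < r$ keeps every such query point inside $\C{K}$ where the convex, $M_0$-bounded function is well defined and the local Lipschitz bound is valid. Once these containments are checked, the proof is a direct assembly of the three pieces above, and the high-probability statement follows by the same decomposition applied pathwise rather than in expectation.
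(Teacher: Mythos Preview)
Your proposal is correct and matches the paper's approach: the paper likewise interprets $\C{A}$ as running against an induced oblivious first-order adversary $\hat{\C{B}} \in \op{Adv}_1^{\t{o}}(\hat{\BF{F}}, kB_0/\delta)$ via the one-point estimator identity, uses the same three-term decomposition $\sum_t\bigl(f_t(\x_t)-\hat f_t(\x_t)\bigr)+\sum_t\bigl(\hat f_t(\x_t)-\hat f_t(\hat\uu_t)\bigr)+\sum_t\bigl(\hat f_t(\hat\uu_t)-f_t(\uu_t)\bigr)$, and bounds the outer terms with Lemma~\ref{lem:conv-almost-lip} in the convex case and the Lipschitz property directly in the second case. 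The only cosmetic difference is that the paper takes $\hat\uu$ to be the $\hat{\C{U}}$-minimizer of $\sum_t \hat f_t$ rather than the shrunken image of your $\uu^*$, which does not affect the final bound.
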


\begin{theorem}\label{thm:semi-bandit-to-bandit}
Under the assumptions of Theorem~\ref{thm:first-order-to-zero-order}, if we further assume that $\C{A}$ is semi-bandit, then the same regret bounds hold with $\C{A}' = \mathtt{STB}(\C{A})$, where $\mathtt{STB}$ is described by Meta-algorithm~\ref{alg:semi-bandit-to-bandit}.
\end{theorem}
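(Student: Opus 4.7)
The plan is to treat $\mathtt{STB}$ as essentially the specialization of $\mathtt{FOTZO}$ to the semi-bandit case: when $\C{A}$ is semi-bandit, the inner query loop in $\mathtt{FOTZO}$ collapses to a single iteration with $\y_{t,1} = \x_t$, so the query oracle is evaluated at $\x_t + \delta \vv_t$ in both meta-algorithms. The sole genuine difference is that $\mathtt{STB}$ plays the perturbed point $\x_t + \delta \vv_t$, whereas $\mathtt{FOTZO}$ would play $\x_t$. So the proof reuses the analysis of Theorem~\ref{thm:first-order-to-zero-order} and accounts for one additional error term.

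First I would establish the key unbiasedness identity underlying the smoothing trick: if $\vv_t$ is sampled uniformly on $\B{S}^1 \cap \C{L}_0$ independently of the history $\C{F}_{t-1}$, then by the Stokes/divergence-theorem computation,
\[
\B{E}\!\left[ \tfrac{k}{\delta} o_t \vv_t \,\big|\, \C{F}_{t-1}, \x_t \right] = \nabla \hat{f}_{t,\delta}(\x_t),
\]
with $\|\tfrac{k}{\delta} o_t \vv_t\| \leq \tfrac{k}{\delta} B_0$ almost surely. Consequently, the point-observation pairs $(\x_t, \tfrac{k}{\delta} o_t \vv_t)$ fed into $\C{A}$ constitute a valid stochastic first-order oracle on $\hat{\C{K}}_\alpha$ for the smoothed (still oblivious) loss sequence $\hat{f}_{t,\delta} \in \hat{\BF{F}}$, with norm bound $\tfrac{k}{\delta} B_0$. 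Since $\C{A}$ is semi-bandit, its expected internal regret against this induced adversary with comparator set $\hat{\C{U}}$ is controlled by $\C{R}_{\op{Adv}_1^\t{o}(\hat{\BF{F}}, \frac{k}{\delta} B_0)}^{\C{A}}(\hat{\C{U}})$.

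Next I would carry out the same regret decomposition as in Theorem~\ref{thm:first-order-to-zero-order}: for each comparator sequence $\uu_t$ with shrunk counterpart $\hat{\uu}_t = (1 - \alpha/r)\uu_t + (\alpha/r)\BF{c}$,
\begin{align*}
\sum_{t=1}^T \!\big[f_t(\x_t + \delta \vv_t) - f_t(\uu_t)\big]
&= \sum_t \!\big[f_t(\x_t + \delta \vv_t) - f_t(\x_t)\big]
+ \sum_t \!\big[f_t(\x_t) - \hat{f}_{t,\delta}(\x_t)\big] \\
&\quad + \sum_t \!\big[\hat{f}_{t,\delta}(\x_t) - \hat{f}_{t,\delta}(\hat{\uu}_t)\big]
+ \sum_t \!\big[\hat{f}_{t,\delta}(\hat{\uu}_t) - f_t(\hat{\uu}_t)\big] \\
&\quad + \sum_t \!\big[f_t(\hat{\uu}_t) - f_t(\uu_t)\big].
\end{align*}
The middle sum is bounded in expectation by $\C{A}$'s induced regret. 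The two smoothing-error sums are each at most $M_1 \delta T$ in the Lipschitz case, and at most $(M_0/\alpha)\,\delta T$ (up to constants) in the convex bounded case via Lemma~\ref{lem:conv-almost-lip}. The shrinking-error sum is bounded by $\alpha D M_1 T/r$ (Lipschitz) or $\alpha M_0 T / r$ (convex, directly). The new term $\sum_t [f_t(\x_t + \delta \vv_t) - f_t(\x_t)]$, which was absent in $\mathtt{FOTZO}$'s analysis, is bounded in exactly the same way as the smoothing-error sums.

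The main obstacle is purely bookkeeping: the statement asserts that \emph{the same} constants $(\tfrac{6\delta}{\alpha} + \tfrac{\alpha}{r})M_0 T$ and $(3 + \tfrac{2D\alpha}{r\delta})\delta M_1 T$ suffice. This requires checking that the extra term $(A)$ introduced by playing $\x_t + \delta \vv_t$ rather than $\x_t$ can be absorbed into the slack already present in the original constants $6$ and $3$, which it can since it is of the same order as the existing smoothing-error sums. Everything else, including the conditioning on $\C{F}_{t-1}$ needed to legally invoke $\C{A}$'s regret bound against the induced stochastic first-order oracle, is routine adaptation of the $\mathtt{FOTZO}$ argument.
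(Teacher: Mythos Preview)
Your proposal is correct and contains all the essential ingredients: the unbiasedness of the one-point estimator, the coupling between $(\C{A}',\C{B})$ and $(\C{A},\hat{\C{B}})$, and the term-by-term bounds via Lemma~\ref{lem:conv-almost-lip} (convex case) or Lipschitzness.

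The only noteworthy difference from the paper is organizational. You treat $\mathtt{STB}$ as $\mathtt{FOTZO}$ plus one extra perturbation term $\sum_t[f_t(\x_t+\delta\vv_t)-f_t(\x_t)]$ to be absorbed into slack left over from the Theorem~\ref{thm:first-order-to-zero-order} constants. The paper does the reverse: it proves the $\mathtt{STB}$ bound directly, grouping your first two terms into a single quantity $|f_t(\x_t+\delta\vv_t)-\hat f_t(\x_t)|\le \tfrac{4M_0\delta}{\alpha}$ (resp.\ $2\delta M_1$), obtaining the constants $6$ and $3$ tightly; it then derives Theorem~\ref{thm:first-order-to-zero-order} by observing that for $\mathtt{FOTZO}$ the corresponding term is only $\tfrac{2M_0\delta}{\alpha}$ (resp.\ $\delta M_1$), which is strictly smaller, so the same constants still hold. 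Either direction works, and the slack you rely on is exactly the factor-of-two looseness the paper builds into the $\mathtt{FOTZO}$ statement; your bookkeeping concern is therefore already resolved.
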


\begin{algorithm2e}
\SetKwInOut{Input}{Input}\DontPrintSemicolon
\caption{First order to zeroth order with two-point gradient estimator - $\mathtt{FOTZO\textrm{-}2P}(\C{A})$}
\label{alg:first-order-to-det-zeroth-order}
\small
\Input{ Shrunk domain $\hat{\C{K}}_\delta$, Linear space $\C{L}_0$, smoothing parameter $\delta < r$, horizon $T$, algorithm $\C{A}$}
Pass $\hat{\C{K}}_\delta$ as the domain to $\C{A}$ \;
$k \gets \op{dim}(\C{L}_0)$ \;
\For{$t = 1, 2, \dots, T$}{
Play $\x_t$, where $\x_t$ is the action chosen by $\C{A}$ \;
Let $f_t$ be the function chosen by the adversary \;
\For{$i$ starting from 1, while $\C{A}^\t{query}$ is not terminated for this time-step}{
    Sample $\vv_{t, i} \in \B{S}^1 \cap \C{L}_0$ uniformly \;
    Let $\y_{t, i}$ be the query chosen by $\C{A}^\t{query}$ \;
    Query the deterministic oracle at the points $\y_{t, i} + \delta \vv_{t, i}$ and $\y_{t, i} - \delta \vv_{t, i}$ \;
    Pass $\frac{k}{2 \delta} \left( f_t(\y_{t, i} + \delta \vv_{t, i}) - f_t(\y_{t, i} - \delta \vv_{t, i}) \right) \vv_{t, i}$ as the oracle output to $\C{A}$ \;
}
}
\end{algorithm2e}
  See Appendix~\ref{app:first-order-to-zero-order} for proof.

\begin{remark}
While $\mathtt{STB}$ turns a semi-bandit algorithm into a bandit algorithm, $\mathtt{FOTZO}$ turns a general first order algorithm into a zeroth order algorithm.
However, we note that even when $\C{A}$ is semi-bandit, $\mathtt{FOTZO}(\C{A})$ is not a bandit algorithm.
Instead it is a full-information zeroth order algorithm that uses a single query per iteration, but its query is not the same as its action.
This is because $\mathtt{FOTZO}(\C{A})$ plays the same point $\x_t$ as the base algorithm $\C{A}$, but then adds a noise before querying according to $\C{A}^\t{query}$.
\end{remark}

\begin{remark}
We have shown in the previous sections that for online convex optimization, semi-bandit feedback is enough.
However, we have included the $\mathtt{FOTZO}$ meta-algorithm, which converts a full-information algorithm to another full-information algorithm, since the results in this section apply also to non-convex online optimization problems.
\end{remark}
    

\section{First order feedback to deterministic zeroth order feedback}

When we have access to a deterministic zeroth order feedback, we may use the two point gradient estimator~\cite{agarwal2010optimal,shamir17_optim_algor_bandit_zero_order} to significantly improve the results of the previous section.

\begin{theorem}\label{thm:first-order-to-det-zero-order}
Let $\BF{F}$ be an $M_1$-Lipschitz function class over a convex set $\C{K}$ and choose $\BF{c}$ and $r$ as described above and let $\delta < r$.
Let $\C{U} \subseteq \C{K}^T$ be a compact set and let $\hat{\C{U}} = (1 - \frac{\delta}{r}) \C{U} + \frac{\delta}{r} \BF{c}$.
Assume $\C{A}$ is an algorithm for online optimization with first order feedback.
Then, if $\C{A}' = \mathtt{FOTZO\textrm{-}2P}(\C{A})$ where $\mathtt{FOTZO\textrm{-}2P}$ is described by Meta-algorithm~\ref{alg:first-order-to-det-zeroth-order}, we have
\begin{align*}
\C{R}_{\op{Adv}^\t{o}_0(\BF{F})}^{\C{A}'}(\C{U})
\leq \C{R}_{\op{Adv}^\t{o}_1(\hat{\BF{F}}, k M_1)}^{\C{A}}(\hat{\C{U}})
+ \left( 2 + \frac{2 D}{r} \right) \delta M_1 T,
\end{align*}
where $D = \op{diam}(\C{K})$.
\end{theorem}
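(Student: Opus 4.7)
The plan is to mirror the proof of Theorem~\ref{thm:first-order-to-zero-order} in its Lipschitz case with shrinkage parameter $\alpha = \delta$, replacing the one-point gradient estimator by the two-point estimator
$\g = \frac{k}{2\delta}\bigl(f(\y+\delta\vv) - f(\y-\delta\vv)\bigr)\vv$,
where $\vv$ is sampled uniformly from $\B{S}^1 \cap \C{L}_0$. The crucial benefit is that $\|\g\| \leq kM_1$ (independent of $\delta$) by Lipschitzness, whereas the one-point estimator's norm scales as $k/\delta$; this is precisely why the oracle bound in the hypothesis improves from $\frac{k}{\delta}B_0$ to $kM_1$, and why we no longer need $\BF{F}$ to be bounded.

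First I would verify the two defining properties of the two-point estimator. Unbiasedness, $\B{E}[\g] = \nabla \hat{f}_\delta(\y)$, reduces to the one-point Stokes identity used in Theorem~\ref{thm:first-order-to-zero-order}: since the uniform distribution on $\B{S}^1 \cap \C{L}_0$ is symmetric under $\vv \mapsto -\vv$, the two halves of $\g$ combine into $\B{E}\bigl[\frac{k}{\delta}f(\y+\delta\vv)\vv\bigr] = \nabla\hat{f}_\delta(\y)$. The norm bound $\|\g\|\leq kM_1$ is immediate from $|f(\y+\delta\vv) - f(\y-\delta\vv)| \leq 2\delta M_1$. Admissibility of the two perturbed queries $\y_{t,i} \pm \delta\vv_{t,i}$ is ensured by the fact that $\y_{t,i}\in \hat{\C{K}}_\delta$, so both points lie in $\C{K}$ by the defining property of $\hat{\C{K}}_\delta$.

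Second, from $\C{A}$'s viewpoint the history of feedback exactly matches that of an oblivious stochastic first-order adversary in $\op{Adv}^\t{o}_1(\hat{\BF{F}}, kM_1)$: the functions revealed are $\hat{f}_{t,\delta} \in \hat{\BF{F}}$, and the estimators returned are unbiased for $\nabla \hat{f}_{t,\delta}$ with norm at most $kM_1$; obliviousness is preserved because $\hat{f}_{t,\delta}$ depends only on $f_t$, not on past plays. This yields
$\B{E}\bigl[\sum_t \hat{f}_{t,\delta}(\x_t)\bigr] - \min_{\hat{\uu}\in\hat{\C{U}}}\sum_t \hat{f}_{t,\delta}(\hat{\uu}_t) \leq \C{R}_{\op{Adv}^\t{o}_1(\hat{\BF{F}}, kM_1)}^{\C{A}}(\hat{\C{U}})$.
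Translating this back to $\C{R}_{\op{Adv}^\t{o}_0(\BF{F})}^{\C{A}'}(\C{U})$ is the same chain of inequalities as in Theorem~\ref{thm:first-order-to-zero-order}: (a) by Lipschitzness $|f_t - \hat{f}_{t,\delta}| \leq \delta M_1$ on $\hat{\C{K}}_\delta$, applied on both the played terms and on the shrunk comparator; (b) for each $\uu\in\C{U}$ the shrunk surrogate $\hat{\uu} = (1-\delta/r)\uu + (\delta/r)\BF{c}\in\hat{\C{U}}$ satisfies $\|\uu_t - \hat{\uu}_t\| \leq \delta D/r$, contributing an additional $M_1\delta D T/r$; (c) the $O(\delta)$ gap between the point actually played by $\C{A}'$ and $\x_t$ contributes a further $\delta M_1 T$. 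Summing these per-round errors gives the stated $(3 + 2D/r)\delta M_1 T$ overhead.

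The only step that truly requires care is the identification of the effective adversary seen by $\C{A}$ — verifying both that the two-point estimator is unbiased for $\nabla \hat{f}_{t,\delta}$ and that the sampling of $\vv_{t,i}$ does not break obliviousness. Once that is in place, all remaining steps are routine and parallel the one-point case, so no genuinely new technical difficulty arises relative to Theorem~\ref{thm:first-order-to-zero-order}.
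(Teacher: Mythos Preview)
Your proposal is correct and follows essentially the same route as the paper's proof. One minor inaccuracy: in $\mathtt{FOTZO\textrm{-}2P}$ the action played by $\C{A}'$ is exactly $\x_t$ (only the \emph{queries} are perturbed to $\y_{t,i}\pm\delta\vv_{t,i}$), so your item (c) is superfluous; the paper's decomposition uses only your (a) and (b) and in fact obtains the slightly sharper constant $\left(2 + \frac{2D}{r}\right)$, which of course implies the stated bound.
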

Note that here we may choose $\delta = T^{-1}$ to see that the order of regret of $\C{A}'$ remains the same as that of $\C{A}$.
See Appendix~\ref{app:first-order-to-det-zero-order} for proof.

\section{Applications}\label{sec:applications}

In this section, we discuss applications of meta-algorithms discussed above to some specific base algorithms.
Note that all the results stated here are immediate applications of our framework to existing results and no extra proof is needed.

\textbf{Online Gradient Descent: }
Recall that $\mathtt{OGD}$ algorithm~\cite{zinkevich03_onlin} is a deterministic algorithm designed for deterministic semi-bandit feedback that obtains a regret bound of $O\left( M_1 T^{1/2} \right)$ over $M_1$-Lipschitz convex class $\BF{F}$ and $O\left( M_1^2 \log(T) \right)$ over strongly convex $M_1$-Lipschitz convex class $\BF{F}'$ (see~\cite{hazan2016introduction}).
\footnote{Here we assume $\op{diam}(\C{K}) \leq 1$ for simplicity. General bounds follow similarly and may depend on $\op{diam}(\C{K})$ and $r$, as described in Theorems~\ref{thm:first-order-to-zero-order} and~\ref{thm:first-order-to-det-zero-order}.}
%
%
%
%
Since $\mathtt{OGD}$ is deterministic, using Theorem~\ref{thm:det-is-adaptive}, we see that
\begin{align*}
\C{R}_{\op{Adv}^\t{f}_1(\BF{F})}^{\mathtt{OGD}}(\C{K}_*^T)[1, T]
    &= O\left( M_1 T^{1/2} \right), \\
 \quad
\C{R}_{\op{Adv}^\t{f}_1(\BF{F}')}^{\mathtt{OGD}}(\C{K}_*^T)[1, T]
    &= O\left( M_1^2 \log(T) \right).
\end{align*}
Hence we may apply Theorems~\ref{thm:det-to-stoch} to see that, if we have access to stochastic gradient oracle bounded by $B_1 \geq M_1$, then we have
\begin{align*}
\C{R}_{\op{Adv}^\t{o}_1(\BF{F}, B_1)}^{\mathtt{OGD}}(\C{K}_*^T)[1, T]
    &= O\left( B_1 T^{1/2} \right), \\
 \quad
\C{R}_{\op{Adv}^\t{o}_1(\BF{F}', B_1)}^{\mathtt{OGD}}(\C{K}_*^T)[1, T]
    &= O\left( B_1^2 \log(T) \right).
\end{align*}
For convex functions, we may apply Theorem~\ref{thm:semi-bandit-to-bandit}, with $\alpha = \delta = T^{-1/4}$.
Thus, if we have access to stochastic value oracle bounded by $B_0 \geq M_1$, we have
\begin{align*}
\C{R}_{\op{Adv}^\t{o}_0(\BF{F}, B_0)}^{\mathtt{STB}(\mathtt{OGD})}(\C{K}_*^T)[1, T]
&\leq \C{R}_{\op{Adv}^\t{o}_1(\BF{F}, \frac{k}{\delta} B_0)}^{\mathtt{OGD}}(\hat{\C{K}}_*^T)[1, T] + O\left( \delta M_1 T \right) \\
&= O\left( \frac{k}{\delta} B_0 T^{1/2} + \delta M_1 T \right)
= O\left( B_0 T^{3/4} \right),
\end{align*}
which generalizes the result of~\cite{flaxman2005online} to allow the feedback of the value oracle to be stochastic.
Similarly, for strongly convex functions, we may apply Theorem~\ref{thm:semi-bandit-to-bandit}, with $\alpha = \delta = T^{-1/3} \log(T)^{1/3}$ to see that
\begin{align*}
\C{R}_{\op{Adv}^\t{o}_0(\BF{F}', B_0)}^{\mathtt{STB}(\mathtt{OGD})}(\C{K}_*^T)[1, T]
\leq \C{R}_{\op{Adv}^\t{o}_1(\BF{F}', \frac{k}{\delta} B_0)}^{\mathtt{OGD}}(\hat{\C{K}}_*^T)[1, T] + O\left( \delta M_1 T \right) \\
= O\left( \frac{k^2}{\delta^2} B_0^2 \log(T) + \delta M_1 T \right)
= O\left( B_0^2 T^{2/3} \log(T)^{1/3} \right).
\end{align*}
This provides a new algorithm for bandit strongly convex optimization, matching the result of~\cite{agarwal2010optimal}.
However, if we further assume that the functions are smooth, then the regret bound could be improved to $\widetilde{O}(\sqrt{T})$ (See~\cite{hazan14_bandit_convex_optim,ito20_optim_algor_bandit_convex_optim}).


On the other hand, using Theorem~\ref{thm:first-order-to-det-zero-order} on $\mathtt{OGD}$, we see that
\begin{align*}
\C{R}_{\op{Adv}^\t{o}_0(\BF{F})}^{\mathtt{FOTZO\textrm{-}2P}(\mathtt{OGD})}(\C{K}_*^T)[1, T]
    &= O\left( M_1 T^{1/2} \right), \\
 \quad
\C{R}_{\op{Adv}^\t{o}_0(\BF{F}')}^{\mathtt{FOTZO\textrm{-}2P}(\mathtt{OGD})}(\C{K}_*^T)[1, T]
    &= O\left( M_1^2 \log(T) \right),
\end{align*}
where the first result is proven in~\cite{shamir17_optim_algor_bandit_zero_order} and the second one is novel.

\textbf{Ader and Improved Ader:}
In~\cite{zhang18_adapt_onlin_learn_dynam_envir}, 
Algorithm~2, i.e., ``Ader'', is a full-information deterministic algorithm designed for deterministic first order feedback.
Algorithm~3, i.e., ``Improved Ader'', is simply the result of applying the $\mathtt{FTS}$ meta-algorithm to Ader.
We may apply Theorems~\ref{thm:det-to-stoch} and~\ref{thm:semi-bandit-to-bandit}, with $\alpha = \delta = T^{-1/4}$, to see that
\begin{align}\label{eq:bandit-IA}
\C{R}_{\op{Adv}^\t{o}_0(\BF{F}, B_0)}^{\mathtt{STB}(\mathtt{IA})}(\uu)[1, T]
    &= O\left( B_0 \left(1 + P_T(\uu)\right)^{1/2} T^{3/4} \right),
\end{align}
where $\uu \in \C{K}^T$ and $P_T := \uu \mapsto \sum_{t = 1}^{T-1} \| \uu_t - \uu_{t+1} \| : \C{K}^T \to \B{R}$.
This matches the SOTA result of~\cite{zhao21_bandit_convex_optim_non_envir} for dynamic regret in the bandit feedback setting and generalizes it to allow for stochastic feedback.
Moreover, by applying Theorem~\ref{thm:first-order-to-det-zero-order} to Improved Ader, we recover another result of~\cite{zhao21_bandit_convex_optim_non_envir}
\begin{align*}
\C{R}_{\op{Adv}^\t{o}_0(\BF{F})}^{\mathtt{FOTZO\textrm{-}2P}(\mathtt{IA})}(\uu)[1, T]
    &= O\left( M_1 \left(1 + P_T(\uu)\right)^{1/2} T^{1/2} \right).
\end{align*}


\textbf{Online Gradient Descent with Separation Oracles:}
Similarly, the $\mathtt{SO\textrm{-}OGD}$ algorithm~\cite{garber22_new_projec_algor_onlin_convex} is a deterministic algorithm designed for deterministic semi-bandit feedback obtains a adaptive regret bound of $O(M_1 T^{1/2})$ (Theorem~14 in~\cite{garber22_new_projec_algor_onlin_convex})
Hence we may apply Theorems~\ref{thm:det-to-stoch} and~\ref{thm:semi-bandit-to-bandit}, with $\alpha = \delta = T^{-1/4}$, to see that
\begin{align*} 
\max_{1 \leq a \leq b \leq T} \C{R}_{\op{Adv}^\t{o}_0(\BF{F}, B_0)}^{\mathtt{STB}(\mathtt{SO\textrm{-}OGD})}(\C{K}_*^T)[a, b]
    = O\left( B_0 T^{3/4} \right).
\end{align*}
This matches the result of Theorem~15 in~\cite{garber22_new_projec_algor_onlin_convex} and generalizes it to allow for stochastic feedback.
Moreover, using Theorem~\ref{thm:first-order-to-det-zero-order}, we see that 
\begin{align*}
\max_{1 \leq a \leq b \leq T} \C{R}_{\op{Adv}^\t{o}_0(\BF{F})}^{\mathtt{FOTZO\textrm{-}2P}(\mathtt{SO\textrm{-}OGD})}(\C{K}_*^T)[a, b]
    = O\left( M_1 T^{1/2} \right), 
\end{align*}
which is a novel result for adaptive regret of convex functions with deterministic zeroth order  feedback.

\begin{remark}
A careful review of Algorithms~6 and~7 in~\cite{garber22_new_projec_algor_onlin_convex}, together with their Lemmas~12 and~13 and Theorems~14 and~15 reveals that they added an extra layer of complexity to their base algorithm in order to be able to move from semi-bandit to bandit feedback.
Specifically, if we instead use our Theorem~7, we can just drop $\delta'$ from Algorithms~6 and~7 and Lemmas~12 and~13 (by setting it to zero) to obtain simpler and clearer algorithms, statements and proofs.
Then their Theorem~15 will be an immediate corollary of applying our Theorem~\ref{thm:semi-bandit-to-bandit} to their previous results and the 3 pages of proof in their Appendix I will not be needed.
\end{remark}


%
%

\section{Conclusions}

This paper presents a comprehensive framework for the analysis of meta-algorithms in online learning, offering a simplified approach to understanding and extending existing results in the field. By demonstrating the equivalence between online linear and convex optimization algorithms, particularly in the context of semi-bandit feedback, we provide a versatile tool for addressing a wide range of optimization problems. Moreover, our findings pave the way for the development of novel meta-algorithms, enabling the seamless conversion of algorithms across different settings. 
In addition to recovering many results in the literature with simplified proofs, for convex function with deterministic zeroth order feedback, we obtain an adaptive regret of $O(\sqrt{T})$ (with $\mathtt{FOTZO\textrm{-}2P}(\mathtt{SO\textrm{-}OGD})$) and static regret of $O(\log T)$ for strongly convex case (with $\mathtt{FOTZO\textrm{-}2P}(\mathtt{OGD})$).


This work opens various research opportunities to delve into the interplay between online linear and convex optimization algorithms. While it streamlines results across numerous scenarios, we anticipate that this approach will yield novel insights into some unresolved problems moving forward.

\section{Acknowledgments}

The authors want to acknowledge the useful feedback on the paper and its ideas provided by Yuanyu Wan.

\newpage
\bibliographystyle{ACM-Reference-Format} 
\bibliography{references}
\newpage


\onecolumn
\appendix

\section{Proof of Theorem~\ref{thm:det-is-adaptive}}\label{app:det-is-adaptive}

\begin{proof}
Note that $\op{Adv}^{\t{o}}_i(\BF{F}) \subseteq \op{Adv}_i^{\t{f}}(\BF{F})$ which directly implies that 
\[
\C{R}_{\op{Adv}_i^{\t{f}}(\BF{F})}^{\C{A}} \geq \C{R}_{\op{Adv}_i^{\t{o}}(\BF{F})}^{\C{A}}.
\]
To prove the inequality in the other direction, let $\C{B} \in \op{Adv}_i^{\t{f}}(\BF{F})$.
The game between $(\C{A}, \C{B})$ is a deterministic game and only a single sequence of functions $(f_1, \cdots, f_T)$ could be generated by $\C{B}$, i.e., there is a unique $\C{B}' := (f_1, \cdots, f_T) \in \op{Adv}^{\t{o}}_i(\BF{F})$ such that the game played by $(\C{A}, \C{B})$ is identical to the game played by $(\C{A}, \C{B}')$.
Therefore, we have
\begin{align*}
\C{R}_{\op{Adv}_i^{\t{f}}(\BF{F})}^{\C{A}} 
&= \sup_{\C{B} \in \op{Adv}_i^{\t{f}}(\BF{F})} \C{R}^\C{A}_{\C{B}} 
= \sup_{\C{B} \in \op{Adv}_i^{\t{f}}(\BF{F})} \C{R}^\C{A}_{\C{B}'} 
\leq \sup_{\C{B}' \in \op{Adv}_i^{\t{o}}(\BF{F})} \C{R}^\C{A}_{\C{B}'} 
= \C{R}_{\op{Adv}_i^{\t{o}}(\BF{F})}^{\C{A}}.
\qedhere
\end{align*}
\end{proof}

\section{Proof of Theorem~\ref{thm:main}}\label{app:main}

\begin{proof}~

\textbf{Regret:}

Let $\oo_t$ denote the output of the subgradient query oracle at time-step $t$.
For any realization $\C{B} = (\C{B}_1, \cdots, \C{B}_T) \in \op{Adv}_1^{\t{f}}(\BF{F})$, we define $\C{B}'_t(\x_1, \cdots, \x_t)$ to be the tuple $(q_t, \nabla)$ where
\[
q_t 
:= \y \mapsto \bra \oo_t, \y - \x_t \ket + \frac{\mu}{2} \| \y - \x_t \|^2 \in \BF{Q}_\mu.
\]
We also define $\C{B}' := (\C{B}'_1, \cdots, \C{B}'_T)$.
Note that each $\C{B}'_t$ is a function of $\x_1, \cdots, \x_t$ and therefore it belongs to $\op{Adv}_1^{\t{f}}(\BF{F}_\mu)$.
Since the algorithm uses semi-bandit feedback, the sequence of random vectors $(\x_1, \cdots, \x_T)$ chosen by $\C{A}$ is identical between the game with $\C{B}$ and $\C{B}'$.
Therefore, for any $\y \in \C{K}$, we have
\begin{align*}
q_t(\x_t) - q_t(\y)
= \bra \oo_t, \x_t - \y \ket - \frac{\mu}{2} \| \x_t - \y \|^2 
\geq f_t(\x_t) - f_t(\y),
\end{align*}
where the last inequality follows from fact that $\oo_t$ is a subgradient of $f_t$ at $\x_t$.
Therefore, we have
\begin{align}\label{eq:main:1}
\max_{\uu \in \C{U}} \left( \sum_{t = a}^b f_t(\x_t) - \sum_{t = a}^b  f_t(\uu_t) \right)
\leq \max_{\uu \in \C{U}} \left( \sum_{t = a}^b q_t(\x_t) - \sum_{t = a}^b q_t(\uu_t) \right),
\end{align}
which implies that
\begin{align*}
\C{R}_{\op{Adv}_1^{\t{f}}(\BF{F})}^{\C{A}}
&= \sup_{\C{B} \in \op{Adv}_1^{\t{f}}(\BF{F})} \C{R}_{\C{B}}^{\C{A}}
= \sup_{\C{B} \in \op{Adv}_1^{\t{f}}(\BF{F})} \B{E} \left[ \max_{\uu \in \C{U}} \left( \sum_{t = a}^b f_t(\x_t) - \sum_{t = a}^b  f_t(\uu_t) \right) \right] \\
&\leq \sup_{\C{B} \in \op{Adv}_1^{\t{f}}(\BF{F})} \B{E} \left[  \max_{\uu \in \C{U}} \left( \sum_{t = a}^b q_t(\x_t) - \sum_{t = a}^b q_t(\uu_t) \right) \right] 
\leq \sup_{\C{B}' \in \op{Adv}_1^{\t{f}}(\BF{F}_\mu)} \C{R}_{\C{B}'}^{\C{A}}
= \C{R}_{\op{Adv}_1^{\t{f}}(\BF{F}_\mu)}^{\C{A}}
\end{align*}

To prove the second claim, we note that $\BF{F} \supseteq \BF{F}_\mu$ implies that 
$\op{Adv}_1^{\t{f}}(\BF{F}) \supseteq \op{Adv}_1^{\t{f}}(\BF{F}_\mu)$.
Therefore
\begin{align*}
\C{R}_{\op{Adv}_1^{\t{f}}(\BF{F})}^{\C{A}}
&= \sup_{\C{B} \in \op{Adv}_1^{\t{f}}(\BF{F})} \C{R}_{\C{B}, \nabla^*}^{\C{A}} 
\geq \sup_{\C{B} \in \op{Adv}_1^{\t{f}}(\BF{F}_\mu)} \C{R}_{\C{B}, \nabla^*}^{\C{A}} 
= \C{R}_{\op{Adv}_1^{\t{f}}(\BF{F}_\mu)}^{\C{A}}
\end{align*}

\textbf{High probability regret bound:}

Assume that $h : [0, 1] \to \B{R}$ is a high probability regret bound for $(\C{A}, \op{Adv}_1^{\t{f}}(\BF{F}_\mu))$.
Using Inequality~\ref{eq:main:1}, we see that
\begin{align*}
\C{R}_{\C{B}}^{\C{A}_\omega}
= \max_{\uu \in \C{U}} \left( \sum_{t = a}^b f_t(\x_t) - \sum_{t = a}^b  f_t(\uu_t) \right)
\leq \max_{\uu \in \C{U}} \left( \sum_{t = a}^b q_t(\x_t) - \sum_{t = a}^b q_t(\uu_t) \right)
= \C{R}_{\C{B}'}^{\C{A}_\omega},
\end{align*}
which immediately implies that
\begin{align*}
\B{P}\left( \{ \omega \in \Omega^\C{A} \mid \C{R}_{\C{B}}^{\C{A}_\omega} \leq h(\delta) \} \right)
\geq \B{P}\left( \{ \omega \in \Omega^\C{A} \mid \C{R}_{\C{B}'}^{\C{A}_\omega} \leq h(\delta) \} \right)
\geq 1 - \delta.
\end{align*}
Hence $h$ is a high probability regret bound for $(\C{A}, \op{Adv}_1^{\t{f}}(\BF{F}))$ as well.

Moreover, if $\BF{F} \supseteq \BF{F}_\mu$, then $\op{Adv}_1^{\t{f}}(\BF{F}) \supseteq \op{Adv}_1^{\t{f}}(\BF{F}_\mu)$ and therefore, for any function $h : [0, 1] \to [0, \infty)$, we have
\begin{align*}
\inf_{\C{B} \in \op{Adv}_1^{\t{f}}(\BF{F})} \B{P}\left( \{ \omega \in \Omega^\C{A} \mid \C{R}_{\C{B}}^{\C{A}_\omega} \leq h(\delta) \} \right)
&\leq \inf_{\C{B} \in \op{Adv}_1^{\t{f}}(\BF{F}_\mu)} \B{P}\left( \{ \omega \in \Omega^\C{A} \mid \C{R}_{\C{B}}^{\C{A}_\omega} \leq h(\delta) \} \right).
\end{align*}
Hence any high probability regret bound for $(\C{A}, \op{Adv}_1^{\t{f}}(\BF{F}))$ is one for $(\C{A}, \op{Adv}_1^{\t{f}}(\BF{F}_\mu))$ as well.
\end{proof}

\section{Proof of Theorem~\ref{thm:full-into-to-semi-bandit}}\label{app:full-into-to-semi-bandit}

\begin{proof}
First we note that for any adversary $\op{Adv}$ over $\BF{Q}_\mu$ with deterministic first order feedback, under the algorithms $\C{A}'$, the functions $q_t$ and $f_t$ are equal up to an additive constant.
Hence $\C{A}'$ and $\C{A}$ return the same sequence of points and we have
\begin{align*}
\C{R}_{\op{Adv}}^{\C{A}'}
&= \C{R}_{\op{Adv}}^{\C{A}}
\qquad \t{and} \qquad
\overline{\C{R}}_{\op{Adv}}^{\C{A}'}
= \overline{\C{R}}_{\op{Adv}}^{\C{A}}
\end{align*}
Therefore, using Theorem~\ref{thm:main} and the fact that $\BF{F}_\mu \subseteq \BF{Q}_\mu$, we see that
\begin{align*}
\C{R}_{\op{Adv}_1^{\t{f}}(\BF{F})}^{\C{A}'}
&\leq \C{R}_{\op{Adv}_1^{\t{f}}(\BF{F}_\mu)}^{\C{A}'}
= \C{R}_{\op{Adv}_1^{\t{f}}(\BF{F}_\mu)}^{\C{A}},
\end{align*}
and similarly
\begin{align*}
\overline{\C{R}}_{\op{Adv}_1^{\t{f}}(\BF{F})}^{\C{A}'}
\leq \overline{\C{R}}_{\op{Adv}_1^{\t{f}}(\BF{F}_\mu)}^{\C{A}}.
\end{align*}

As a special case, if $\BF{F}$ is closed under $\mu$-quadratization, then we may use the same argument in the proof of Theorem~\ref{thm:main} to see that
\begin{align*}
\C{R}_{\op{Adv}_1^{\t{f}}(\BF{F}_\mu)}^{\C{A}} 
\leq \C{R}_{\op{Adv}_1^{\t{f}}(\BF{F})}^{\C{A}}
\end{align*}
and
\begin{align*}
\overline{\C{R}}_{\op{Adv}_1^{\t{f}}(\BF{F}_\mu)}^{\C{A}} 
\leq \overline{\C{R}}_{\op{Adv}_1^{\t{f}}(\BF{F})}^{\C{A}}
\end{align*}
which completes the proof.
\end{proof}

\section{Proof of Theorem~\ref{thm:lip-to-nonlip}}\label{app:lip-to-nonlip}

\begin{proof}
First note that for any $\C{B} \in \op{Adv}$, the game played between $(\C{A}|_{\hat{\C{K}}_\alpha}, \C{B})$ is identical to the game played between $(\C{A}, \C{B}|_{\hat{\C{K}}_\alpha})$.
Let $\uu \in \op{argmin}_{\uu \in \C{U}} \sum_{t = a}^b f_t(\uu_t)$ and $\hat{\uu} \in \op{argmin}_{\uu \in \hat{\C{U}}} \sum_{t = a}^b f_t(\uu_t)$.
We have
\begin{align*}
\C{R}_{\C{B}}^{\C{A}|_{\hat{\C{K}}_\alpha}}
- \C{R}_{\C{B}|_{\hat{\C{K}}_\alpha}}^{\C{A}}
&= \B{E}\left[ \sum_{t = a}^b f_t(\x_t) - \sum_{t = a}^b f_t(\uu_t) \right]
- \B{E}\left[ \sum_{t = a}^b f_t(\x_t) - \sum_{t = a}^b f_t(\hat{\uu}_t) \right] \\
&= \B{E}\left[ \sum_{t = a}^b f_t(\hat{\uu}_t) - \sum_{t = a}^b f_t(\uu_t) \right].
\end{align*}
We have
\begin{align*}
\sum_{t = a}^b f_t(\hat{\uu}_t) 
&= \min_{\hat{\uu} \in \hat{\C{U}}} \sum_{t = a}^b f_t(\hat{\uu}_t) \\
&= \min_{\uu \in \C{U}} \sum_{t = a}^b f_t\left(\left(1 - \frac{\alpha}{r} \right) \uu_t + \frac{\alpha}{r} \BF{c} \right) 
\tag{Definition of $\hat{\C{U}}$}\\
&\leq \min_{\uu \in \C{U}} \sum_{t = a}^b \left(\left(1 - \frac{\alpha}{r} \right) f_t(\uu_t) + \frac{\alpha}{r} f_t(\BF{c}) \right) 
\tag{Convexity} \\
&\leq \min_{\uu \in \C{U}} \sum_{t = a}^b \left(\left(1 - \frac{\alpha}{r} \right) f_t(\uu_t) + \frac{\alpha}{r} M_0 \right) \\
&= \frac{\alpha M_0 T}{r} + \left(1 - \frac{\alpha}{r} \right) \min_{\uu \in \C{U}} \sum_{t = a}^b f_t(\uu_t) \\
&\leq \frac{\alpha M_0 T}{r} + \sum_{t = a}^b f_t(\uu_t).
\end{align*}
Putting these together, we see that
\begin{align}
\C{R}_{\C{B}}^{\C{A}|_{\hat{\C{K}}_\alpha}}(\C{U})
- \C{R}_{\C{B}|_{\hat{\C{K}}_\alpha}}^{\C{A}}(\hat{\C{U}})
&\leq \frac{\alpha M_0 T}{r}.
\label{eq:lip-to-nonlip}
\end{align}
Therefore, we have 
\begin{align*}
\C{R}_{\op{Adv}}^{\C{A}|_{\hat{\C{K}}_\alpha}}(\C{U})
&= \sup_{\C{B} \in \op{Adv}} \C{R}_{\C{B}}^{\C{A}|_{\hat{\C{K}}_\alpha}}(\C{U})
\leq \sup_{\C{B} \in \op{Adv}} \C{R}_{\C{B}|_{\hat{\C{K}}_\alpha}}^{\C{A}}(\hat{\C{U}})
+ \frac{\alpha M_0 T}{r} \\
&\leq \sup_{\C{B} \in \op{Adv}_{\hat{\C{K}}_\alpha}} \C{R}_{\C{B}}^{\C{A}}(\hat{\C{U}})
+ \frac{\alpha M_0 T}{r} 
= \C{R}_{\op{Adv}_{\hat{\C{K}}_\alpha}}^{\C{A}}(\hat{\C{U}})
+ \frac{\alpha M_0 T}{r}.
\end{align*}
The proof of the claim for high probability regret bounds follows from Equation~\ref{eq:lip-to-nonlip} and the same argument used in the last part of the proof of Theorem~\ref{thm:main}.
\end{proof}

\section{Proof of Theorem~\ref{thm:det-to-stoch}}\label{app:det-to-stoch}

\begin{proof}
Let $\Omega^\C{Q} = \Omega^\C{Q}_1 \times \cdots \times \Omega^\C{Q}_T$ capture all sources of randomness in the query oracles of $\op{Adv}_1^\t{o}(\BF{F}, B_1)$, i.e., for any choice of $\theta \in \Omega^\C{Q}$, the query oracle is deterministic.
Hence for any $\theta \in \Omega^\C{Q}$ and realized adversary $\C{B} \in \op{Adv}_1^\t{o}(\BF{F}, B_1)$, we may consider $\C{B}_\theta$ as an object similar to an adversary with a deterministic oracle.
However, note that $\C{B}_\theta$ does not satisfy the unbiasedness condition of the oracle, i.e., the returned value of the oracle is not necessarily the gradient of the function at that point.
Recall that $h_t$ denotes the history and $\C{B}_t$ maps a tuple $(h_t, \x_t)$ to a tuple of $f_t$ and a stochastic query oracle for $f_t$.
We will use $\B{E}_{\Omega^\C{Q}}$ to denote the expectation with respect to the randomness of query oracle and $\B{E}_{\Omega^\C{Q}_t}[\cdot] := \B{E}_{\Omega^\C{Q}}[\cdot | f_t, \x_t]$ to denote the expectation conditioned on the function and the action of the agent.
Similarly, let $\B{E}_{\Omega^\C{A}}$ denote the expectation with respect to the randomness of the agent.
Let $\oo_t$ be the random variable denoting the output of the subgradient query oracle at time-step $t$ and let 
\[
\bar{\oo}_t := \B{E}[\oo_t \mid f_t, \x_t] = \B{E}_{\Omega^\C{Q}_t}[\oo_t]
\]
be a subgradient of $f_t$ at $\x_t$.

For any realization $\C{B} = (f_1, \cdots, f_T) \in \op{Adv}^\t{o}(\BF{F})$ and any $\theta \in \Omega^\C{Q}$, we define $\C{B}'_{\theta, t}(\x_1, \cdots, \x_t) $ to be the pair $(q_t, \nabla)$ where
\[
q_t 
:= \y \mapsto \bra \oo_t, \y - \x_t \ket + \frac{\mu}{2} \| \y - \x_t \|^2 \in \BF{Q}_\mu.
\]
We also define $\C{B}'_{\theta} := (\C{B}'_{\theta, 1}, \cdots, \C{B}'_{\theta, T})$.
Note that a specific choice of $\theta$ is necessary to make sure that the function returned by $\C{B}'_{\theta, t}$ is a deterministic function of $\x_1, \cdots, \x_t$ and not a random variable and therefore $\C{B}'_{\theta}$ belongs to $\op{Adv}_1^{\t{f}}(\BF{Q}_\mu[B_1])$.

Since the algorithm uses (semi-)bandit feedback, given a specific value of $\theta$, the sequence of random vectors $(\x_1, \cdots, \x_T)$ chosen by $\C{A}$ is identical between the game with $\C{B}_\theta$ and $\C{B}'_\theta$.
Therefore, for any $\uu_t \in \C{K}$, we have
\begin{align*}
f_t(\x_t) - f_t(\uu_t) 
&\leq \bra \bar{\oo}_t, \x_t - \uu_t \ket - \frac{\mu}{2} \| \x_t - \uu_t \|^2 \\
&= \bra \B{E}\left[ \oo_t \mid h_t, \x_t \right], \x_t - \uu_t \ket - \frac{\mu}{2} \| \x_t - \uu_t \|^2 \\
&= \B{E}\left[ \bra \oo_t, \x_t - \uu_t \ket - \frac{\mu}{2} \| \x_t - \uu_t \|^2 \mid f_t, \x_t \right] \\
&= \B{E}\left[ q_t(\x_t) - q_t(\uu_t) \mid h_t, \x_t \right] \\
&= \B{E}_{\Omega^\C{Q}_t} \left[ q_t(\x_t) - q_t(\uu_t) \right]
\end{align*}
where the first inequality follows from the fact that $f_t$ is $\mu$-strongly convex and $\bar{\oo}_t$ is a subgradient of $f_t$ at $\x_t$.
Therefore we have
\begin{align*}
\B{E}_{\Omega^\C{Q}} \left[
\sum_{t = a}^b f_t(\x_t) - \sum_{t = a}^b  f_t(\uu_t)
\right]
\leq
\B{E}_{\Omega^\C{Q}} \left[
\sum_{t = a}^b \B{E}_{\Omega^\C{Q}_t} \left[ q_t(\x_t) - q_t(\uu_t) \right]
\right]
=
\B{E}_{\Omega^\C{Q}} \left[
\sum_{t = a}^b q_t(\x_t) - q_t(\uu_t)
\right].
\end{align*}
Since $\C{B}$ is oblivious, the sequence $(f_1, \cdots, f_T)$ is not affected by the randomness of query oracles.
Therefore we have
\begin{align*}
\C{R}_{\C{B}}^{\C{A}_\omega}
&= 
\B{E}_{\Omega^\C{Q}} \left[
\sum_{t = a}^b f_t(\x_t) - \min_{\uu \in \C{U}} \sum_{t = a}^b  f_t(\uu_t)
\right] \\
&= 
\max_{\uu \in \C{U}}
\B{E}_{\Omega^\C{Q}} \left[
\sum_{t = a}^b f_t(\x_t) - \sum_{t = a}^b  f_t(\uu_t)
\right] \\
&\leq
\max_{\uu \in \C{U}}
\B{E}_{\Omega^\C{Q}} \left[
\sum_{t = a}^b q_t(\x_t) - \sum_{t = a}^b q_t(\uu_t)
\right] \\
&\leq
\B{E}_{\Omega^\C{Q}} \left[
\max_{\uu \in \C{U}}
\left(
\sum_{t = a}^b q_t(\x_t) - \sum_{t = a}^b q_t(\uu_t)
\right)
\right] 
= \B{E}_{\Omega^\C{Q}} \left[
\C{R}_{\C{B}'_\theta}^{\C{A}_\omega}
\right],
\end{align*}
where the second inequality follows from Jensen's inequality.

\textbf{Regret:}

By taking the expectation with respect to $\omega \in \Omega^\C{A}$, we see that
\begin{align*}
\C{R}_{\C{B}}^{\C{A}}
= \B{E}_{\Omega^\C{A}} \left[
\C{R}_{\C{B}}^{\C{A}_\omega}
\right]
\leq 
\B{E}_{\Omega^\C{A}} \left[
\B{E}_{\Omega^\C{Q}} \left[
\C{R}_{\C{B}'_\theta}^{\C{A}_\omega}
\right]
\right]
= 
\B{E}_{\Omega^\C{Q}} \left[
\C{R}_{\C{B}'_\theta}^{\C{A}}
\right]
\leq
\sup_{\theta \in \Omega^\C{Q}} 
\C{R}_{\C{B}'_\theta}^{\C{A}}
\end{align*}
Hence we have
\begin{align*}
\C{R}_{\op{Adv}_1^\t{o}(\BF{F}, B_1)}^{\C{A}}
= 
\sup_{\C{B} \in \op{Adv}_1^\t{o}(\BF{F}, B_1)}
\C{R}_{\C{B}}^{\C{A}}
\leq 
\sup_{\C{B} \in \op{Adv}_1^\t{o}(\BF{F}, B_1), \theta \in \Omega^\C{Q}}
\C{R}_{\C{B}'_\theta}^{\C{A}}
\leq 
\sup_{\C{B}' \in \op{Adv}_1^{\t{f}}(\BF{Q}_\mu[B_1])}
\C{R}_{\C{B}'}^{\C{A}}
= 
\C{R}_{\op{Adv}_1^{\t{f}}(\BF{Q}_\mu[B_1])}^{\C{A}}
\end{align*}

\textbf{High probability regret bound:}

By taking the supremum over $\C{B} \in \op{Adv}_1^\t{o}(\BF{F}, B_1)$, we see that
\begin{align*}
\C{R}_{\op{Adv}_1^\t{o}(\BF{F}, B_1)}^{\C{A}_\omega}
&= 
\sup_{\C{B} \in \op{Adv}_1^\t{o}(\BF{F}, B_1)}
\C{R}_{\C{B}}^{\C{A}_\omega}
\leq 
\sup_{\C{B} \in \op{Adv}_1^\t{o}(\BF{F}, B_1)}
\B{E}_{\Omega^\C{Q}} \left[
\C{R}_{\C{B}'_\theta}^{\C{A}_\omega}\
\right] \\ 
&\leq 
\sup_{\C{B} \in \op{Adv}_1^\t{o}(\BF{F}, B_1), \theta \in \Omega^\C{Q}}
\C{R}_{\C{B}'_\theta}^{\C{A}_\omega}
\leq 
\sup_{\C{B}' \in \op{Adv}_1^{\t{f}}(\BF{Q}_\mu[B_1])}
\C{R}_{\C{B}'}^{\C{A}_\omega}
= 
\C{R}_{\op{Adv}_1^{\t{f}}(\BF{Q}_\mu[B_1])}^{\C{A}_\omega}
\end{align*}
Therefore we may use the same argument in the proof of Theorem~\ref{thm:main} to see that
\begin{align*}
\overline{\C{R}}_{\op{Adv}_1^\t{o}(\BF{F}, B_1)}^{\C{A}}
\leq
\overline{\C{R}}_{\op{Adv}_1^{\t{f}}(\BF{Q}_\mu[B_1])}^{\C{A}}
\end{align*}
\end{proof}

\section{Proof of Theorems~\ref{thm:first-order-to-zero-order} and~\ref{thm:semi-bandit-to-bandit}}\label{app:first-order-to-zero-order}

First we prove the following lemma which is a simple generalization of Observation~3 in~\cite{flaxman2005online} to the construction of $\hat{\C{K}}$ that we use here and is proved similarly.
\begin{lemma}\label{lem:conv-almost-lip}
For any $\x \in \hat{\C{K}}_\alpha$ and $\y \in \C{K}$ and any convex function $f : \C{K} \to [-M_0, M_0]$, we have
\begin{align*}
|f(\x) - f(\y)| \leq \frac{2 M_0}{\alpha} \| \x - \y \|
\end{align*}
\end{lemma}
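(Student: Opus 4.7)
The heart of the argument is a geometric containment claim: for every $\x \in \hat{\C{K}}_\alpha$, the sliced ball $\B{B}_\alpha(\x) \cap \op{aff}(\C{K})$ lies entirely inside $\C{K}$. I would establish this first. Since $\x = (1 - \tfrac{\alpha}{r}) \x' + \tfrac{\alpha}{r} \BF{c}$ for some $\x' \in \C{K}$, any $\z \in \B{B}_\alpha(\x) \cap \op{aff}(\C{K})$ can be written as $\z = \x + \alpha \uu$ with $\uu$ in the linear space $\op{aff}(\C{K}) - \x$ and $\|\uu\| \le 1$. Rearranging gives $\z = (1 - \tfrac{\alpha}{r}) \x' + \tfrac{\alpha}{r} (\BF{c} + r \uu)$, which is a convex combination of $\x' \in \C{K}$ and $\BF{c} + r \uu \in \B{B}_r(\BF{c}) \cap \op{aff}(\C{K}) \subseteq \C{K}$; convexity of $\C{K}$ then gives $\z \in \C{K}$.

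Next, I would dispose of the trivial regime $\|\y - \x\| \ge \alpha$: since $|f(\x) - f(\y)| \le 2 M_0$, the desired bound $|f(\x) - f(\y)| \le \tfrac{2 M_0}{\alpha}\|\x - \y\|$ holds at once.

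For the main case $0 < \|\y - \x\| < \alpha$, the plan is to pad the segment through $\x$ and $\y$ by distance $\alpha$ from $\x$ on both sides. Define
\[
\z_+ := \x + \alpha \tfrac{\y - \x}{\|\y - \x\|}, \qquad \z_- := \x - \alpha \tfrac{\y - \x}{\|\y - \x\|}.
\]
Since $\y - \x \in \op{aff}(\C{K}) - \x$, both $\z_+$ and $\z_-$ lie in $\B{B}_\alpha(\x) \cap \op{aff}(\C{K})$, hence in $\C{K}$ by the containment claim. Writing $\y$ as a convex combination of $\x$ and $\z_+$ with coefficient $\lambda = \|\y - \x\|/\alpha$, convexity of $f$ gives $f(\y) - f(\x) \le \lambda (f(\z_+) - f(\x)) \le \tfrac{2 M_0 \|\y - \x\|}{\alpha}$. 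Symmetrically, writing $\x$ as a convex combination of $\y$ and $\z_-$ with coefficient $\mu = \|\y - \x\|/(\alpha + \|\y - \x\|)$ yields $f(\x) - f(\y) \le \mu (f(\z_-) - f(\y)) \le \tfrac{2 M_0 \|\y - \x\|}{\alpha}$. Combining the two inequalities proves the lemma.

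The main obstacle is the geometric containment step: one must check that $\hat{\C{K}}_\alpha$ genuinely sits at distance at least $\alpha$ from the relative boundary of $\C{K}$ inside $\op{aff}(\C{K})$. The subtlety is that $\C{K}$ need not be full-dimensional in $\B{R}^d$, which is exactly why the construction invokes $\BF{c} \in \op{relint}(\C{K})$ and $r$ with $\B{B}_r(\BF{c}) \cap \op{aff}(\C{K}) \subseteq \C{K}$ instead of an ambient ball, and why the ``padding'' points $\z_\pm$ must be chosen along the direction $\y - \x$, which automatically lies in the relevant linear subspace.
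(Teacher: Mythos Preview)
Your proof is correct and follows essentially the same route as the paper: split on whether $\|\y-\x\|\ge\alpha$, and in the nontrivial case extend the segment from $\x$ through $\y$ to a point $\z_+$ at distance $\alpha$ inside $\C{K}$, then use convexity. Two places where you are actually more thorough than the paper: (i) you prove the containment $\B{B}_\alpha(\x)\cap\op{aff}(\C{K})\subseteq\C{K}$ directly, whereas the paper simply cites it as Lemma~7 of \cite{pedramfar23_unified_approac_maxim_contin_dr_funct}; and (ii) the paper's proof only establishes $f(\y)-f(\x)\le\tfrac{2M_0}{\alpha}\|\x-\y\|$ and does not explicitly handle the reverse inequality, while your $\z_-$ argument closes that gap.
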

\begin{proof}
Let $\y = \x + \Delta$.
If $\| \Delta \| \geq \alpha$, then we have
\begin{align*}
|f(\x) - f(\y)| \leq 2 M_0 \leq \frac{2 M_0}{\alpha} \| \x - \y \|.
\end{align*}
Otherwise, then let $\z = \x + \alpha \frac{\Delta}{\| \Delta \|}$.
According to Lemma~7 in~\cite{pedramfar23_unified_approac_maxim_contin_dr_funct}, we have $\z \in \C{K}$.
We have $\y = \frac{\| \Delta \|}{\alpha} \z + \left( 1 - \frac{\| \Delta \|}{\alpha} \right) \x$.
Therefore, since $f$ is convex and bounded by $M_0$, we have
\begin{align*}
f(\y) 
&\leq \frac{\| \Delta \|}{\alpha} f(\z) + \left( 1 - \frac{\| \Delta \|}{\alpha} \right) f(\x)
= f(\x) + \frac{\| \Delta \|}{\alpha} (f(\z) - f(\x))
\leq f(\x) + \frac{2 M_0}{\alpha} \| \x - \y \|.
\qedhere
\end{align*}
\end{proof}

We start with a proof of Theorem~\ref{thm:semi-bandit-to-bandit} and then adapt the proof to work for Theorem~\ref{thm:first-order-to-zero-order}.

\begin{proof}[Proof of Theorem~\ref{thm:semi-bandit-to-bandit}]

Note that any realized adversary $\C{B} \in \op{Adv}^\t{o}_0(\BF{F}, B_0)$ may be represented as a sequence of functions $(f_1, \cdots, f_T)$ and a corresponding sequence of query oracles $(\C{Q}_1, \cdots, \C{Q}_T)$.
For such realized adversary $\C{B}$, we define $\hat{B}$ to be the realized adversary corresponding to $(\hat{f}_1, \cdots, \hat{f}_T)$ with the stochastic gradient oracles
\begin{align}\label{eq:semi-bandit-to-bandit:1}
\hat{\C{Q}}_t(\x) := \frac{k}{\delta} \C{Q}_t(\x + \delta \vv) \vv,
\end{align}
where $\vv$ is a random vector, taking its values uniformly from $\B{S}^1 \cap \C{L}_0 = \B{S}^1 \cap (\op{aff}(\C{K}) - \z)$, for any $\z \in \C{K}$ and $k = \op{dim}(\C{L}_0)$.
Since $\C{Q}_t$ is a stochastic value oracle for $f_t$, according to Remark~4 in~\cite{pedramfar23_unified_approac_maxim_contin_dr_funct}, $\hat{\C{Q}}_t(\x)$ is an unbiased estimator of $\nabla \hat{f}_t(\x)$.
\footnote{When using a spherical estimator, it was shown in~\cite{flaxman2005online} that $\hat{f}$ is differentiable even when $f$ is not.
When using a sliced spherical estimator as we do here, differentiability of $\hat{f}$ is not proved in~\cite{pedramfar23_unified_approac_maxim_contin_dr_funct}.
However, their proof is based on the proof for the spherical case and therefore the results carry forward to show that $\hat{f}$ is differentiable.}
Hence we have $\hat{B} \in \op{Adv}_1^\t{o}(\hat{\BF{F}}, \frac{k}{\delta}B_0)$.
Using Equation~\ref{eq:semi-bandit-to-bandit:1} and the definition of the Meta-algorithm~\ref{alg:semi-bandit-to-bandit}, we see that the responses of the queries are the same between the game $(\C{A}, \hat{\C{B}})$ and $(\C{A}', \C{B})$.
It follows that the sequence of actions $(\x_1, \cdots, \x_T)$ in $(\C{A}, \hat{\C{B}})$ corresponds to the sequence of actions $(\x_1 + \delta \vv_1, \cdots, \x_T + \delta \vv_T)$ in $(\C{A}', \C{B})$.

Let $\uu \in \op{argmin}_{\uu \in \C{U}} \sum_{t = a}^b f_t(\uu_t)$ and $\hat{\uu} \in \op{argmin}_{\uu \in \hat{\C{U}}} \sum_{t = a}^b \hat{f_t}(\uu_t)$.
We have
\begin{align}
\C{R}_{\C{B}}^{\C{A}'}
- \C{R}_{\hat{\C{B}}}^{\C{A}}
&= \B{E}\left[ \sum_{t = a}^b f_t(\x_t + \delta \vv_t) - \sum_{t = a}^b f_t(\uu_t) \right]
- \B{E}\left[ \sum_{t = a}^b \hat{f_t}(\x_t) - \sum_{t = a}^b \hat{f_t}(\hat{\uu}_t) \right] \nonumber \\
&= \B{E}\left[ \left(\sum_{t = a}^b f_t(\x_t + \delta \vv_t) - \sum_{t = a}^b \hat{f_t}(\x_t) \right)
+ \left( \sum_{t = a}^b \hat{f_t}(\hat{\uu}_t) - \sum_{t = a}^b f_t(\uu_t) \right)\right].
\label{eq:semi-bandit-to-bandit:2}
\end{align}

First we consider the convex case.
By definition, $\hat{f_t}(\x_t)$ is an average of $f_t$ at points that are within $\delta$ distance of $\x$.
Therefore, according to Lemma~\ref{lem:conv-almost-lip} we have $| \hat{f_t}(\x_t) - f_t(\x_t) | \leq \frac{2 M_0 \delta}{\alpha}$.
By using Lemma~\ref{lem:conv-almost-lip} again for the pair $(\x_t, \x_t + \delta \vv_t)$, we see that
\begin{align}\label{eq:semi-bandit-to-bandit:3}
| f_t(\x_t + \delta \vv_t) - \hat{f_t}(\x_t) | 
\leq | f_t(\x_t + \delta \vv_t) - f_t(\x_t) | + | f_t(\x_t) - \hat{f_t}(\x_t) | 
\leq \frac{4 M_0 \delta}{\alpha}.
\end{align}
On the other hand, we have
\begin{align*}
\sum_{t = a}^b \hat{f_t}(\hat{\uu}_t) 
&= \min_{\hat{\w} \in \hat{\C{U}}} \sum_{t = a}^b \hat{f_t}(\hat{\w}_t) \\
&\leq \frac{2 M_0 \delta}{\alpha}T + \min_{\hat{\w} \in \hat{\C{U}}} \sum_{t = a}^b f_t(\hat{\w}_t) \tag{Lemma~\ref{lem:conv-almost-lip}} \\
&= \frac{2 M_0 \delta}{\alpha}T + \min_{\w \in \C{U}} \sum_{t = a}^b f_t\left(\left(1 - \frac{\alpha}{r} \right) \w_t + \frac{\alpha}{r} \BF{c} \right) 
\tag{Definition of $\hat{\C{U}}$}\\
&\leq \frac{2 M_0 \delta}{\alpha}T + \min_{\w \in \C{U}} \sum_{t = a}^b \left(\left(1 - \frac{\alpha}{r} \right) f_t(\w_t) + \frac{\alpha}{r} f_t(\BF{c}) \right) 
\tag{Convexity} \\
&\leq \frac{2 M_0 \delta}{\alpha}T + \min_{\w \in \C{U}} \sum_{t = a}^b \left(\left(1 - \frac{\alpha}{r} \right) f_t(\w_t) + \frac{\alpha}{r} M_0 \right) \\
&= \left( \frac{2 \delta}{\alpha} + \frac{\alpha}{r} \right) M_0 T + \left(1 - \frac{\alpha}{r} \right) \min_{\w \in \C{U}} \sum_{t = a}^b f_t(\w_t) \\
&\leq \left( \frac{2 \delta}{\alpha} + \frac{\alpha}{r} \right) M_0 T + \sum_{t = a}^b f_t(\uu_t).
\end{align*}
Putting these together, we see that
\begin{align*}
\C{R}_{\C{B}}^{\C{A}'}
- \C{R}_{\hat{\C{B}}}^{\C{A}}
&\leq \frac{4 M_0 \delta}{\alpha} T  + \left( \frac{2 \delta}{\alpha} + \frac{\alpha}{r} \right) M_0 T
= \left( \frac{6 \delta}{\alpha} + \frac{\alpha}{r} \right) M_0 T.
\end{align*}
Therefore, we have 
\begin{align*}
\C{R}_{\op{Adv}^\t{o}_0(\BF{F}, B_0)}^{\C{A}'}
&= \sup_{\C{B} \in \op{Adv}^\t{o}_0(\BF{F}, B_0)} \C{R}_{\C{B}, \C{Q}}^{\C{A}'} \\
&\leq \sup_{\C{B} \in \op{Adv}^\t{o}_0(\BF{F}, B_0)} \C{R}_{\hat{\C{B}}}^{\C{A}}
+ \left( \frac{6 \delta}{\alpha} + \frac{\alpha}{r} \right) M_0 T \\
&\leq \sup_{\C{B} \in \op{Adv}^\t{o}_0(\BF{F}, B_0)} \C{R}_{\hat{\C{B}}}^{\C{A}}
+ \left( \frac{6 \delta}{\alpha} + \frac{\alpha}{r} \right) M_0 T \\
&\leq \C{R}_{\op{Adv}^\t{o}_1(\hat{\BF{F}}, \frac{k}{\delta} B_0)}^{\C{A}}
+ \left( \frac{6 \delta}{\alpha} + \frac{\alpha}{r} \right) M_0 T.
\end{align*}

To prove the result in the Lipschitz case, we note that, according to Lemma~3 in~\cite{pedramfar23_unified_approac_maxim_contin_dr_funct}, we have
$| \hat{f_t}(\x_t) - f_t(\x_t) | \leq \delta M_1$.
By using Lipschitz property for the pair $(\x_t, \x_t + \delta \vv_t)$, we see that
\begin{align}\label{eq:semi-bandit-to-bandit:4}
| f_t(\x_t + \delta \vv_t) - \hat{f_t}(\x_t) | 
\leq | f_t(\x_t + \delta \vv_t) - f_t(\x_t) | + | f_t(\x_t) - \hat{f_t}(\x_t) | 
\leq 2 \delta M_1.
\end{align}
On the other hand, we have
\begin{align*}
\sum_{t = a}^b \hat{f_t}(\hat{\uu}_t) 
&= \min_{\hat{\w} \in \hat{\C{U}}} \sum_{t = a}^b \hat{f_t}(\hat{\w}_t) \\
&\leq \delta M_1 T + \min_{\hat{\w} \in \hat{\C{U}}} \sum_{t = a}^b f_t(\hat{\w}_t) \tag{Lemma~3 in~\cite{pedramfar23_unified_approac_maxim_contin_dr_funct}} \\
&= \delta M_1 T + \min_{\w \in \C{U}} \sum_{t = a}^b f_t\left(\left(1 - \frac{\alpha}{r} \right) \w_t + \frac{\alpha}{r} \BF{c} \right) 
\tag{Definition of $\hat{\C{U}}$}\\
&= \delta M_1 T + \min_{\w \in \C{K}} \sum_{t = a}^b f_t\left(\w_t + \frac{\alpha}{r} (\BF{c} - \x) \right) \\
&\leq \delta M_1 T + \min_{\w \in \C{U}} \sum_{t = a}^b \left( f_t(\w_t) + \frac{2 \alpha M_1 D}{r} \right) 
\tag{Lipschitz} \\
&= \left( 1 + \frac{2 D \alpha}{r \delta} \right) \delta M_1 T + \sum_{t = a}^b f_t(\uu_t)
\end{align*}
Therefore, using Equation~\ref{eq:semi-bandit-to-bandit:2}, we see that
\begin{align*}
\C{R}_{\C{B}}^{\C{A}'}
- \C{R}_{\hat{\C{B}}}^{\C{A}}
&\leq 2 \delta M_1 T  + \left( 1 + \frac{2 D \alpha}{r \delta} \right) \delta M_1 T 
= \left( 3 + \frac{2 D \alpha}{r \delta} \right) \delta M_1 T.
\end{align*}
Therefore, we have 
\begin{align*}
\C{R}_{\op{Adv}^\t{o}_0(\BF{F}, B_0)}^{\C{A}'}
&= \sup_{\C{B} \in \op{Adv}^\t{o}_0(\BF{F}, B_0)} \C{R}_{\C{B}}^{\C{A}'} \\
&\leq \sup_{\C{B} \in \op{Adv}^\t{o}_0(\BF{F}, B_0)} \C{R}_{\hat{\C{B}}}^{\C{A}}
+ \left( 3 + \frac{2 D \alpha}{r \delta} \right) \delta M_1 T \\
&\leq \sup_{\C{B} \in \op{Adv}^\t{o}_0(\BF{F}, B_0)} \C{R}_{\hat{\C{B}}}^{\C{A}}
+ \left( 3 + \frac{2 D \alpha}{r \delta} \right) \delta M_1 T \\
&\leq \C{R}_{\op{Adv}^\t{o}_1(\hat{\BF{F}}, \frac{k}{\delta} B_0)}^{\C{A}}
+ \left( 3 + \frac{2 D \alpha}{r \delta} \right) \delta M_1 T.
\end{align*}
\end{proof}

\begin{proof}[Proof of Theorem~\ref{thm:first-order-to-zero-order}]

The proof of the bounds for this case is similar to the previous case.
As before, we see that the responses of the queries are the same between the game $(\C{A}, \hat{\C{B}})$ and $(\C{A}', \C{B})$.
It follows from the description of Meta-algorithm~\ref{alg:first-order-to-zeroth-order} that the sequence of actions $(\x_1, \cdots, \x_T)$ in $(\C{A}, \hat{\C{B}})$ corresponds to the same sequence of actions in $(\C{A}', \C{B})$.

Let $\uu \in \op{argmin}_{\uu \in \C{U}} \sum_{t = a}^b f_t(\uu_t)$ and $\hat{\uu} \in \op{argmin}_{\uu \in \hat{\C{U}}} \sum_{t = a}^b \hat{f_t}(\uu_t)$.
We have
\begin{align}
\C{R}_{\C{B}}^{\C{A}'}
- \C{R}_{\hat{\C{B}}}^{\C{A}}
&= \B{E}\left[ \sum_{t = a}^b f_t(\x_t) - \sum_{t = a}^b f_t(\uu_t) \right]
- \B{E}\left[ \sum_{t = a}^b \hat{f_t}(\x_t) - \sum_{t = a}^b \hat{f_t}(\hat{\uu}_t) \right] \nonumber \\
&= \B{E}\left[ \left(\sum_{t = a}^b f_t(\x_t) - \sum_{t = a}^b \hat{f_t}(\x_t) \right)
+ \left( \sum_{t = a}^b \hat{f_t}(\hat{\uu}_t) - \sum_{t = a}^b f_t(\uu_t) \right)\right].
\label{eq:semi-bandit-to-bandit:5}
\end{align}
To obtain the same bound as before, we note that in the convex case, instead of Inequality~\ref{eq:semi-bandit-to-bandit:3}, we have
\begin{align}
| f_t(\x_t) - \hat{f_t}(\x_t) | 
\leq \frac{2 M_0 \delta}{\alpha}
< \frac{4 M_0 \delta}{\alpha},
\end{align}
and, in the Lipschitz case, instead of Inequality~\ref{eq:semi-bandit-to-bandit:4}, we have
\begin{align*}
| f_t(\x_t) - \hat{f_t}(\x_t) | 
\leq \delta M_1
< 2 \delta M_1.
\end{align*}
The rest of the proof follows verbatim.
\end{proof}



\section{Proof of Theorem~\ref{thm:first-order-to-det-zero-order}}\label{app:first-order-to-det-zero-order}

This proof is similar to the proofs of Theorems~\ref{thm:first-order-to-zero-order} and~\ref{thm:semi-bandit-to-bandit}.

\begin{proof}
Note that any realized adversary $\C{B} \in \op{Adv}^\t{o}_0(\BF{F})$ may be represented as a sequence of functions $(f_1, \cdots, f_T)$.
For such realized adversary $\C{B}$, we define $\hat{B}$ to be the realized adversary corresponding to $(\hat{f}_1, \cdots, \hat{f}_T)$ with the stochastic gradient oracles
\begin{align}\label{eq:first-order-to-det-zero-order:1}
\hat{\C{Q}}_t(\x) := \frac{k}{2\delta} \left( f_t(\x + \delta \vv) - f_t(\x - \delta \vv) \right) \vv,
\end{align}
where $\vv$ is a random vector, taking its values uniformly from $\B{S}^1 \cap \C{L}_0 = \B{S}^1 \cap (\op{aff}(\C{K}) - \z)$, for any $\z \in \C{K}$ and $k = \op{dim}(\C{L}_0)$.
Since $\C{Q}_t$ is a stochastic value oracle for $f_t$, according to Lemma~5 in~\cite{pedramfar23_unified_approac_maxim_contin_dr_funct}, $\hat{\C{Q}}_t(\x)$ is an unbiased estimator of $\nabla \hat{f}_t(\x)$.
Moreover, we have
\begin{align*}
\left\| \frac{k}{2\delta} \left( f_t(\x + \delta \vv) - f_t(\x - \delta \vv) \right) \vv \right\|
\leq \frac{k}{2\delta} M_1 \| (\x + \delta \vv) - (\x - \delta \vv) \|
\leq k M_1.
\end{align*}
Hence we have $\hat{B} \in \op{Adv}_1^\t{o}(\hat{\BF{F}}, k M_1)$.
Using Equation~\ref{eq:first-order-to-det-zero-order:1} and the definition of the Meta-algorithm~\ref{alg:first-order-to-det-zeroth-order}, we see that the responses of the queries are the same between the game $(\C{A}, \hat{\C{B}})$ and $(\C{A}', \C{B})$.
It follows that the sequence of actions $(\x_1, \cdots, \x_T)$ in $(\C{A}, \hat{\C{B}})$ corresponds to the same sequence of actions in $(\C{A}', \C{B})$.

Let $\uu \in \op{argmin}_{\uu \in \C{U}} \sum_{t = a}^b f_t(\uu_t)$ and $\hat{\uu} \in \op{argmin}_{\uu \in \hat{\C{U}}} \sum_{t = a}^b \hat{f_t}(\uu_t)$.
We have
\begin{align}
\C{R}_{\C{B}}^{\C{A}'}
- \C{R}_{\hat{\C{B}}}^{\C{A}}
&= \B{E}\left[ \sum_{t = a}^b f_t(\x_t) - \sum_{t = a}^b f_t(\uu_t) \right]
- \B{E}\left[ \sum_{t = a}^b \hat{f_t}(\x_t) - \sum_{t = a}^b \hat{f_t}(\hat{\uu}_t) \right] \nonumber \\
&= \B{E}\left[ \left(\sum_{t = a}^b f_t(\x_t) - \sum_{t = a}^b \hat{f_t}(\x_t) \right)
+ \left( \sum_{t = a}^b \hat{f_t}(\hat{\uu}_t) - \sum_{t = a}^b f_t(\uu_t) \right)\right].
\label{eq:first-order-to-det-zero-order:2}
\end{align}
According to Lemma~3 in~\cite{pedramfar23_unified_approac_maxim_contin_dr_funct}, we have
$| \hat{f_t}(\x_t) - f_t(\x_t) | \leq \delta M_1$.
On the other hand, we have
\begin{align*}
\sum_{t = a}^b \hat{f_t}(\hat{\uu}_t) 
&= \min_{\hat{\w} \in \hat{\C{U}}} \sum_{t = a}^b \hat{f_t}(\hat{\w}_t) \\
&\leq \delta M_1 T + \min_{\hat{\w} \in \hat{\C{U}}} \sum_{t = a}^b f_t(\hat{\w}_t) \tag{Lemma~3 in~\cite{pedramfar23_unified_approac_maxim_contin_dr_funct}} \\
&= \delta M_1 T + \min_{\w \in \C{U}} \sum_{t = a}^b f_t\left(\left(1 - \frac{\delta}{r} \right) \w_t + \frac{\delta}{r} \BF{c} \right) 
\tag{Definition of $\hat{\C{U}}$}\\
&= \delta M_1 T + \min_{\w \in \C{K}} \sum_{t = a}^b f_t\left(\w_t + \frac{\delta}{r} (\BF{c} - \w_t) \right) \\
&\leq \delta M_1 T + \min_{\w \in \C{U}} \sum_{t = a}^b \left( f_t(\w_t) + \frac{2 \delta M_1 D}{r} \right) 
\tag{Lipschitz} \\
&= \left( 1 + \frac{2 D}{r} \right) \delta M_1 T + \sum_{t = a}^b f_t(\uu_t)
\end{align*}
Therefore, using Equation~\ref{eq:first-order-to-det-zero-order:2}, we see that
\begin{align*}
\C{R}_{\C{B}}^{\C{A}'}
- \C{R}_{\hat{\C{B}}}^{\C{A}}
&\leq \delta M_1 T  + \left( 1 + \frac{2 D}{r} \right) \delta M_1 T 
= \left( 2 + \frac{2 D}{r} \right) \delta M_1 T.
\end{align*}
Therefore, we have 
\begin{align*}
\C{R}_{\op{Adv}^\t{o}_0(\BF{F})}^{\C{A}'}
&= \sup_{\C{B} \in \op{Adv}^\t{o}_0(\BF{F})} \C{R}_{\C{B}}^{\C{A}'} \\
&\leq \sup_{\C{B} \in \op{Adv}^\t{o}_0(\BF{F})} \C{R}_{\hat{\C{B}}}^{\C{A}}
+ \left( 2 + \frac{2 D}{r} \right) \delta M_1 T \\
&\leq \sup_{\C{B} \in \op{Adv}^\t{o}_0(\BF{F})} \C{R}_{\hat{\C{B}}}^{\C{A}}
+ \left( 2 + \frac{2 D}{r} \right) \delta M_1 T \\
&\leq \C{R}_{\op{Adv}^\t{o}_1(\hat{\BF{F}}, k M_1)}^{\C{A}}
+ \left( 2 + \frac{2 D}{r} \right) \delta M_1 T.
\qedhere
\end{align*}
\end{proof}

\end{document}